\newcolumntype{L}{>{\centering\arraybackslash}m{0.25\linewidth}}
\begin{document}

% If your paper is accepted and the title of your paper is very long,
% the style will print as headings an error message. Use the following
% command to supply a shorter title of your paper so that it can be
% used as headings.
%
\runningtitle{Sampling in High-Dimensions using Stochastic Interpolants and FBSDEs}

% If your paper is accepted and the number of authors is large, the
% style will print as headings an error message. Use the following
% command to supply a shorter version of the authors names so that
% they can be used as headings (for example, use only the surnames)
%
%\runningauthor{Surname 1, Surname 2, Surname 3, ...., Surname n}

\twocolumn[

\aistatstitle{Sampling in High-Dimensions using Stochastic Interpolants and\\ Forward-Backward Stochastic Differential Equations}

\aistatsauthor{ Anand Jerry George%\\
    % SMILS,EPFL\\
    % \texttt{anand.george@epfl.ch} \\
\And
Nicolas Macris}%\\
% SMILS, EPFL\\
% \texttt{nicolas.macris@epfl.ch} }

\aistatsaddress{ École Polytechnique Fédérale de Lausanne (EPFL),\\
Lab for Statistical Mechanics of Inference in Large Systems (SMILS),\\ CH-1015 Lausanne,\\ Switzerland} ]

\begin{abstract}
  We present a class of diffusion-based algorithms to draw samples from high-dimensional probability distributions given their unnormalized densities. Ideally, our methods can transport samples from a Gaussian distribution to a specified target distribution in finite time. Our approach relies on the stochastic interpolants framework to define a time-indexed collection of probability densities that bridge a Gaussian distribution to the target distribution. Subsequently, we derive a diffusion process that obeys the aforementioned probability density at each time instant. Obtaining such a diffusion process involves solving certain Hamilton-Jacobi-Bellman PDEs. We solve these PDEs using the theory of forward-backward stochastic differential equations (FBSDE) together with machine learning-based methods. Through numerical experiments, we demonstrate that our algorithm can effectively draw samples from distributions that conventional methods struggle to handle.
\end{abstract}

\section{Introduction}

Probabilistic modeling is at the heart of modern machine learning, where data is often conceptualized as samples drawn from a probability distribution in a high-dimensional space. 
%In this context, a common objective in machine learning is regression, where we predict a subset of variables based on the values of other variables. 
%Whereas, 
In this context, the field of generative modeling, which focuses on drawing new samples from probability distributions given a few samples from the distribution, has witnessed spectacular advancements in recent years, empowering researchers to create synthetic images, videos, and other data with astonishing quality. A concept that has catalyzed significant progress in generative modeling is score-based diffusion modeling \cite{song_generative_2019,song_score-based_2021,ho_denoising_2020}. This technique incrementally introduces noise to the data until it reaches a state of pure noise, all while learning how to reverse this process effectively. 

These new ideas have found applications in the classical sampling problem, wherein the objective is to generate samples from a probability distribution given only its unnormalized density \cite{zhang_path_2022,berner_optimal_2023,vargas_denoising_2022,richter_improved_2023,grenioux_stochastic_2024,huang_reverse_2023}. 
%The classical sampling problem amounts to drawing samples from a probability distribution given its unnormalized probability density. 
In the realm of computational science and statistics, sampling techniques play a pivotal role in various applications ranging from Bayesian inference to machine learning algorithms. The ability to efficiently sample from complex probability distributions underpins the success of numerous computational methodologies, and traditional sampling methods, such as Markov Chain Monte Carlo (MCMC), have been widely employed for this purpose. However, these methods often encounter challenges in high-dimensional spaces or distributions with intricate geometries. Moreover, MCMC methods, which construct a Markov chain with a stationary distribution aligned with the target distribution, often suffer from slow convergence due to long mixing times.

In light of the effectiveness of diffusion processes in generative modeling, there is significant interest in leveraging these processes for sampling.
The aim is to find diffusion processes such that starting with the samples from a tractable distribution such as Gaussian, the diffusion process should produce a sample from the desired distribution at the final time. Unlike conventional MCMC methods, diffusion-based approaches do not require tuning of proposal distributions or acceptance probabilities. Furthermore, diffusion-based methods seem to mitigate the slow convergence of MCMC methods.

Traditional score-based generative modeling generates samples from a target distribution by learning how to reverse a forward diffusion process that maps the target distribution to a prior distribution. Ideally, these diffusion processes require an infinite time horizon for convergence.
%Typically, the forward processes in diffusion models converge exponentially fast to the prior distribution. Consequently, running the diffusion process for only a finite amount of time still provides a reasonably good approximation for sampling from the target distribution. 
In this work, we design computationally efficient diffusion-based samplers using ideas from the \textit{stochastic interpolants} framework, and \textit{forward-backward SDEs}, to generate exact samples from the target distribution within a finite time.
\\[4pt]
\textbf{Problem Statement:}
We are interested in sampling from a probability distribution $\pi$ with support on $\R^d$ by constructing diffusion processes which run for a finite time. We are given an unnormalized probability density function $\hat{\pi}$, such that $\pi(x) = \frac{\hat{\pi}(x)}{\int\hat{\pi}(x)dx}$.

\textbf{Notations:}
We denote a Gaussian distribution (and its density) with mean $\mu$ and covariance $\Sigma$ by $\cN{\mu,\Sigma}$. A uniform random variable in the interval $[0,T]$ is denoted by $U([0,T])$. A derivative  of a scalar function $f$ with respect to time $t\in \mathbb{R}_+$ is denoted by $\dot{f}$. A gradient with respect to the space variables $x\in \mathbb{R}^d$ is denoted with $\nabla$, and $\Delta$ denotes the Laplacian. We use $\Vert\cdot\Vert$ for the  Euclidean norm in $\mathbb{R}^d$.
%%%%%%%%%%%%%%%%%%%%%%%%%

\subsection{Our Contributions}

There are infinitely many diffusion processes that can drive samples from a prior distribution to a target distribution in a finite time. However, computationally tractable ways to find and simulate diffusions that have distributions other than Dirac distribution at the initial time and given target distribution at finite time are still unknown to the best of our knowledge. We take a step towards this by providing a principled approach for generating such diffusions when the prior distribution is Gaussian. \emph{Specifically, we propose a class of diffusion-based sampling methods that, starting with samples from the Gaussian distribution, can produce samples from a target distribution in finite time given its unnormalized density.} We achieve this by taking an approach based on the stochastic interpolants framework \cite{albergo_stochastic_2023}. To the best of our knowledge, this is the first time that stochastic interpolants have been used for classical sampling. Our approach reduces the sampling problem to solving Hamilton-Jacobi-Bellman (HJB) equations; a class of well-studied partial differential equations (PDEs) that arise frequently in the field of optimal control. Traditionally, HJB PDEs are solved by minimizing the corresponding control costs \cite{zhang_path_2022}. Instead, for important reasons that will become clear later (see the discussion in Section~\ref{sec:solving_fbsde}), \emph{our approach uses the theory of forward-backward stochastic differential equations (FBSDE) that connects solutions of HJB PDEs (more generally, nonlinear parabolic PDEs) to solutions of a certain set of stochastic differential equations called FBSDEs}. 
%We will explain some of the advantages of this approach later on. 
Moreover, we solve these FBSDEs using machine learning-based methods \cite{han_solving_2018,e_algorithms_2022,raissi_forward-backward_2018}. One of the advantages of our methods is that they allow solving HJB PDEs without the need to compute computationally expensive Neural SDE gradients \cite{zhang_path_2022,berner_optimal_2023}. The techniques that we develop to solve HJB PDEs using FBSDEs can be of independent interest.  
%We start by defining a time dependent density function satisfying the constraints that at $t=0$ it has a density of a Gaussian and at $t=T$, it has the density of the target distribution. Thereafter, we develop methods to implement a stochastic process with above defined marginal density. We then identify that certain quantities related to the density satisfies certain HJB PDEs. We use a method based on FBSDEs and machine learning to solve the PDEs.
%We present here glimpses of the numerical results for our linear interpolant-based sampler. 

\subsection{Our Techniques}
We draw inspiration from the stochastic interpolants framework, which begins with a family of time-indexed random variables defining the density of the diffusion process at each time instant $t$, and then explores methods to realize such a diffusion process. Such a collection of random variables that have desired probability distribution at the time boundaries are known as stochastic interpolants \cite{albergo_stochastic_2023}. 
%We study a specific case known as the linear one-sided stochastic interpolants (linear interpolants in short). 
To illustrate our techniques, in this section we restrict our focus to one of our methods based on \textit{half interpolants}. 
%The half interpolant-based method and another based on linear interpolant will be presented later in detail. 
A half interpolant is a collection of random variables $\{x_t\}_{t\in[0,T]}$ given by $x_t = g(t)x^*+r(t)z,$ for $0\le t\le T$, where $g,r:[0,T]\rightarrow \R_+$ are functions such that $\frac{g}{r}$ is a non-decreasing. Furthermore, $g$ satisfies the boundary condition $g(0)=0$. Here, $x^*\sim\nu$ and $z\sim\cN{0,I_d}$ are independent random variables.

 Our aim is to implement a diffusion process $\{S_t\}_{t\in[0,T]}$ such that the distribution of $S_t$ is same as $x_t$ for all $t\in[0,T]$ and also enforce the constraint that $x_T,S_T\sim\pi$ (thereby implicitly choosing $\nu$). If realized, such a diffusion process can drive samples $S_0\sim\cN{0,r^2(0)I_d}$ to $S_T\sim\pi$. 
 
 Let $\rho(t,\cdot)$ denote the probability density of $x_t$ and let $s(t,x) := \nabla\log{\rho(t,x)}$ denote the so-called score function of density $\rho$. Lemma~\ref{lemma:pde_density} shows that $\rho$ satisfies a ``Fokker-Planck'' PDE given by
\begin{equation}\label{eqn:FPE_tech}
    \partial_t\rho-\frac{\eps^2(t)}{2}\Delta\rho+\nabla\cdot\left(\left(b(t,x)+\frac{\eps^2(t)}{2}s(t,x)\right)\rho\right) = 0,%\quad \rho(0,\cdot) \equiv \cN{0,r^2(0)I_d}.
\end{equation}
with initial condition $\rho(0,\cdot) \equiv \cN{0,r^2(0)I_d}$, where $b(t,x) = \dot g(t)\expectCond{x^*}{x_t=x}-r(t)\dot r(t)s(t,x)$ and $\eps:[0,T]\rightarrow \R_+$ is an arbitrary function.

Equation~\ref{eqn:FPE_tech} suggests that we can realize a process $S_t$ with density $\rho$ by simulating a stochastic differential equation (SDE) given by:
\begin{equation}\label{eqn:SDEforSampling_tech}
   dS_t = \left(b(t,S_t)+\frac{\eps^2(t)}{2}s(t,S_t)\right)dt+\eps(t)dW_t,%;\quad S_0\sim\cN{0,r^2(0)I_d}, 
\end{equation}
with $S_0\sim\cN{0,r^2(0)I_d}$, where $\{W_t\}_{t\in{0,T}}$ is a standard Brownian motion.
Therefore, it suffices to have access to functions $b$ and $s$ (responsible for the drift term) to implement a process $S_t$ that has the same marginal distribution as $x_t$. We will show (Lemma~\ref{lemma:b_and_s}) that both $b$ and $s$ can be expressed in terms of $\expectCond{x^*}{x_t=x}$. Thus, it is sufficient to learn the function $\expectCond{x^*}{x_t=x}$. Towards this, for some $\beta:[0,T]\rightarrow\R_+$, we consider a function $u:[0,T]\times\R^d\rightarrow\R$ given by
\begin{align}\label{eqn:velocity_denf_tech}
    u(t,x) &= \log\frac{\rho(t,\beta(t)x)}{\psi(t,\beta(t)x)}\\ &= \log\int_{\R^d}\nu(x^*)e^{\frac{\beta(t)g(t)}{r^2(t)}<x,x^*>-\frac{g^2(t)}{2r^2(t)}\norm{x^*}^2}dx^*,\nonumber
\end{align}
where $\psi(t,\cdot)$ is the density of the isotropic Gaussian with variance $r^2(t)$. Taking the gradient of (\ref{eqn:velocity_denf_tech}), we note that $\expectCond{x^*}{x_t=x} = \frac{r^2(t)}{\beta(t)g(t)}\nabla u(t,\frac{x}{\beta(t)})$. Hence, a feasible way to obtain $\expectCond{x^*}{x_t=x}$ is to compute $\nabla u$. A direct calculation (Lemma~\ref{lemma:velociyt_HJB_PDE}) shows that $u$ satisfies the following HJB equation:
\begin{equation}\label{eqn:velocity_pde_tech}
    \partial_t u + \frac{\sigma^2}{2}\Delta u + \frac{\sigma^2}{2}\norm{\nabla u}^2-\partial_t\log\left(\frac{\beta(t)g(t)}{r^2(t)}\right)x^T\nabla u = 0,
\end{equation}
where $\sigma^2(t) = 2\frac{r^2(t)}{\beta^2(t)}\partial_t\log\frac{g(t)}{r(t)}$. The condition that $\frac{g}{r}$ is non-decreasing assures that $\sigma^2$ is a positive function. 
Observe that (\ref{eqn:velocity_pde_tech}) is a backward Kolmogorov PDE and can be solved given a terminal condition. To satisfy the constraint $x_T\sim\pi$, we want $\rho(T,\cdot)=\pi(\cdot)$, which gives the terminal condition $u(T, x)= \varphi(x) \equiv \log\frac{\pi(\beta(T)x)}{\psi(T,\beta(T)x)}$. The function $\beta$ is a design parameter, which we can choose such that the coefficients of the PDE are well-defined for $t\in[0,T]$. 

There are several ways to obtain the solution $u$ (more importantly $\nabla u$) of HJB PDE (\ref{eqn:velocity_pde_tech}) under the terminal condition $\varphi$. In the optimal control literature, a prominent approach for solving HJB PDEs involves minimizing the sum of control costs and the terminal cost (see Lemma~\ref{lemma_app:oc_optimization}). %However, for reasons that will become clear later, we opt not to pursue this route. 
Instead, we exploit the connections between the solutions of non-linear PDEs and solutions to the corresponding FBSDE to solve (\ref{eqn:velocity_pde_tech}). The remainder of our approach then relies on machine learning-based techniques to solve an FBSDE associated with the PDE (\ref{eqn:velocity_pde_tech}). Solving the FBSDE gives us access to the function $\nabla u$ on an appropriate domain. Once we have the $\nabla u$, subsequently we obtain functions $b$ and $s$. We then can realize the process $S_t$ using (\ref{eqn:SDEforSampling_tech}). Figure~\ref{fig:trajectory_tech} shows sample trajectories of the diffusion process thus obtained when the target distribution is a mixture of Gaussians.

\begin{figure}
  \centering
  \includegraphics[scale = 0.3]{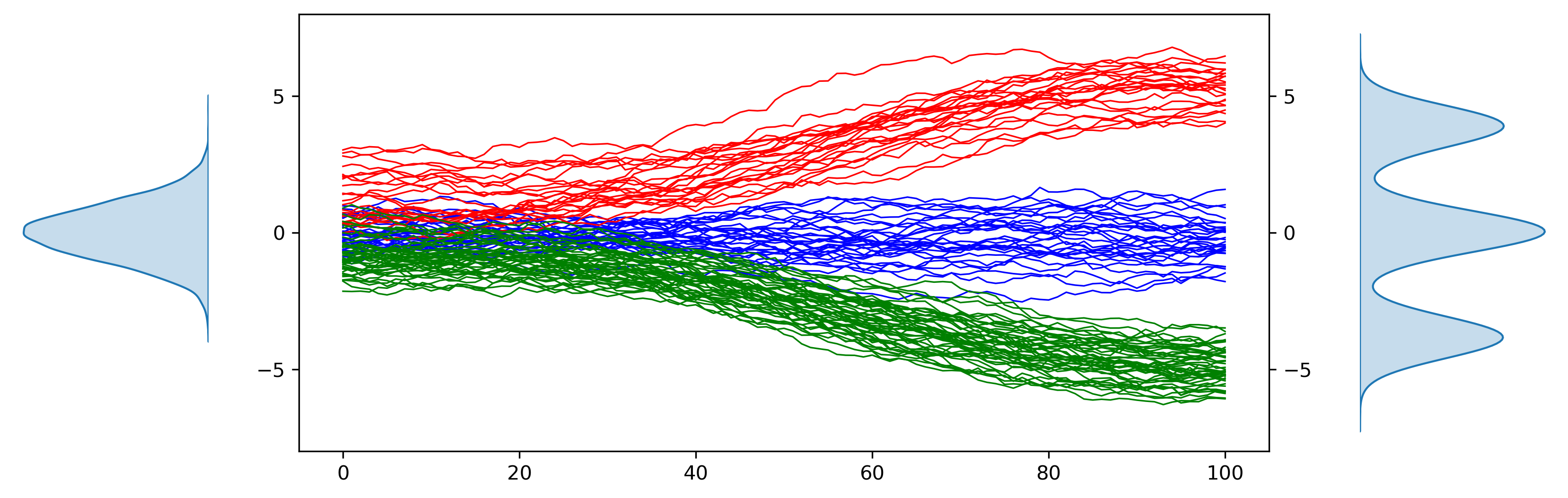}
  \caption{Sample trajectories of diffusion process for sampling from Gaussian mixture.}
  \label{fig:trajectory_tech}
\end{figure}

\subsection{Related Works}
\paragraph{MCMC methods:} For decades, MCMC has stood as the primary method for sampling from unnormalized densities. Techniques that integrate MCMC with annealing and importance sampling methods have consistently yielded superior results in this domain. Among these, Annealed Importance Sampling (AIS) \cite{neal_annealed_2001} and its Sequential Monte Carlo (SMC) \cite{del_moral_sequential_2006} extensions are widely regarded as state-of-the-art in numerous sampling tasks. Nonetheless, in many practical scenarios, the convergence of these methods can be notably slow. Additionally, analyzing their performance can pose significant challenges, further complicating their application in real-world settings.
\paragraph{Diffusion-based methods:} 
The utilization of diffusions for sampling has been prevalent for a considerable period, with the Langevin diffusion standing out as a prominent example. However, the usage of non-equilibrium dynamics of diffusions for sampling has gained popularity only recently. Noteworthy examples of diffusion-based samplers include Path Integral Sampler (PIS) \cite{zhang_path_2022}, Denoised Diffusion Sampler (DDS) \cite{vargas_denoising_2022}, and time reversed Diffusion Sampler (DIS) \cite{berner_optimal_2023}, Generalized Bridge Sampler (GBS) \cite{richter_improved_2023}, among others. These samplers leverage advancements in machine learning to address an optimization problem, with the solution being a control function that guides samples from a prior density to samples from the target density. For a detailed comparison of the performance of these methods, we refer the reader to \cite{blessing_beyond_2024}. More recently, the concept of time-reversing diffusion processes has been combined with MCMC techniques to develop samplers that do not require training \cite{grenioux_stochastic_2024,huang_reverse_2023}.
\paragraph{Stochastic interpolants:} 
The framework of stochastic interpolants was recently introduced in \cite{albergo_stochastic_2023}. Despite its conceptual simplicity, this framework provides a unified approach to utilizing diffusions for sampling, particularly in generative modeling tasks. Stochastic interpolants play a significant role in the derivation of our methods. In particular, our methods strive to learn certain quantities related to the densities defined by the stochastic interpolants. Subsequently, these quantities are used for sampling.
\paragraph{Schr\"odinger bridge:}
For arbitrary prior and target distributions, the task of finding a diffusion process that maps one to the other can be formulated as an optimization problem known as a the Schr\"odinger bridge problem. Concretely, the dynamical formulation of Schr\"odinger Bridge is the optimization problem $\min_{\mathbb{Q}\in\mathcal{P}(\mathbb{P}_0,\mathbb{P}_T)} D_{\text{KL}}(\mathbb{Q}||\mathbb{P})$, where $\mathcal{P}(\mathbb{P}_0,\mathbb{P}_T)$ is the set of all path measures having density $\mathbb{P}_0$ at $t=0$ and $\mathbb{P}_T$ at $t=T$ and $\mathbb{P}$ is a reference path measure. Solving a Schr\"odinger bridge problem is generally challenging, as it requires solving a set of coupled partial differential equations (PDEs) \cite{chen_likelihood_2021}. However, an instance of the Schrödinger bridge problem that is relatively easier to solve arises when the prior distribution $\mathbb{P}_0$ is a Dirac distribution. In this scenario, the Schrödinger bridge problem reduces to solving a single Hamilton-Jacobi-Bellman (HJB) PDE. The resultant diffusion process is known as a F\"ollmer process \cite{follmer_time_1986}. The PIS \cite{zhang_path_2022} algorithm--a special case of the sampling method we propose--is an implementation of F\"ollmer process.

\section{Preliminaries: Relation between diffusions and PDEs}
In this section we introduce the connections between diffusions generated by Stochastic Differential Equations (SDEs) and Partial Differential Equations (PDEs). We first recall the well known relation between 
diffusions and Fokker-Planck equations, and then we briefly review the connection between FBSDEs and non-linear parabolic PDEs. 

Let $\{W_t\}_{t\in[0,T]}$ be a standard Brownian motion. Consider a stochastic process $\{X_t\}_{t\in[0,T]}$ generated by the SDE:
\begin{equation*}
    dX_t = \mu(t,X_t)dt+\sigma(t)dW_t,\quad X_0\sim\nu,
\end{equation*}
where $\mu:[0,T]\times\R^d\rightarrow\R^d$ is the drift coefficient and $\sigma:[0,T]\rightarrow\R$ is the diffusion coefficient. The diffusion coefficient $\sigma$ can in general take a matrix form and may vary as a function of space. However, for the sake of simplicity in our presentation, we limit our discussion to the scenario where $\sigma$ is a scalar-valued function solely dependent on $t$. Let $\rho:[0,T]\times\R^d\rightarrow\R_+$ denote the probability density of $\{X_t\}_{t\in[0,T]}$, i.e., $X_t\sim\rho(t,\cdot)$. Then, $\rho$ satisfies a Fokker-Planck equation given by
\begin{equation*}
    \partial_t\rho-\frac{\sigma^2}{2}\Delta\rho+\nabla\cdot(\mu\rho)=0,\quad \rho(0,x) = \nu(x).
\end{equation*}
This constitutes a forward Kolmogorov PDE, which is well-defined as an initial value problem. Next, consider the following backward Kolmogorov PDE also known as \textit{quasi-linear parabolic partial differential equation}:
\begin{equation}\label{eqn:semiParaPDE}
    \partial_t u+ \frac{\sigma^2}{2}\Delta u + \mu^T\nabla u + f(t,x,u,\sigma^T\nabla u) = 0,%\quad u(T,x) = \varphi(x),
\end{equation}
with terminal condition $u(T,x) = \varphi(x)$, where $u:[0,T]\times \R^d\rightarrow\R$ is the solution of the PDE, $\mu:[0,T]\times\R^d\times\R\times\R^d\rightarrow\R^d$ and $\sigma:[0,T]\rightarrow\R$ are coefficient functions, $f:[0,T]\times\R^d\times\R\times\R^d\rightarrow\R$ is a non-linearity function, and $\varphi:\R^d\rightarrow\R$ a given terminal condition. The solution to PDE (\ref{eqn:semiParaPDE}) is related to diffusion processes via the so-called forward-backward stochastic differential equations (FBSDE). Consider the following set of stochastic differential equations:
\begin{align}\label{eqn:pardouxSDE}
\begin{split}
    dX_t &= \mu (t,X_t,Y_t,Z_t)dt + \sigma (t)dW_t,\\
    dY_t &= -f(t,X_t,Y_t,Z_t)dt + Z_t^TdW_t, \,\, Y_T = \varphi(X_T)
\end{split}
\end{align}
where $(X_t,Y_t,Z_t)$ are stochastic processes adapted to the natural filtration of $W_t$. Following pioneering work of Bismuth \cite{bismut_conjugate_1973}, it was shown by Pardoux and Peng \cite{pardoux_adapted_1990,pardoux_backward_1998,pardoux_forward-backward_1999} that under certain regularity conditions on $\mu,\sigma,$ and $f$, quite remarkably, there exists a unique solution $(X_t,Y_t,Z_t)$ to the above set of SDEs. It can be shown that the solution must satisfy $Y_t = u(t,X_t)$ and $Z_t = \sigma(t)\nabla u(t,X_t)$, where $u$ is the solution to PDE (\ref{eqn:semiParaPDE}). Thus we can solve the PDE (\ref{eqn:semiParaPDE}) by solving FBSDE (\ref{eqn:pardouxSDE}).

\section{Main Ideas}
In this section, we present our methods in detail. We introduce two algorithms for sampling from probability distributions--both based on the stochastic interpolants framework and the FBSDE formulation for solving partial differential equations. First, we review the definition of a linear interpolant (similar to the one-sided linear stochastic interpolant in \cite{albergo_stochastic_2023}).

\begin{definition}[Linear interpolants] For some $T>0$, let $g,r:[0,T]\rightarrow \R_+$ be such that $\frac{g}{r}$ is a non-deceasing function. Let $x^*\sim\nu$ and $z\sim\cN{0,I_d}$ be independent random variables. A \textit{Linear interpolant} is a collection of random variables $\{x_t\}_{t\in[0,T]}$ given by 
\begin{equation}\label{eqn:half_interpolant}
x_t = g(t)x^*+r(t)z,\quad 0\le t\le T. 
\end{equation}   
We call $x_t$ a \textit{half interpolant} if we include the boundary condition $g(0)=0$. Further, we call $x_t$ a \textit{full interpolant} if $g$ and $r$ satisfies the boundary conditions $g(0) = r(T)=0, g(T)=1$. We don't enforce any condition on $r(0)$.
\end{definition}
Figure~\ref{fig:linearInterpolants} in Appendix~\ref{appndx:interpolants} shows some examples of linear interpolants. Observe that, since $g/r$ is a non-decreasing function, $x_t$ becomes more informative about $x^*$ as time progresses. For a full interpolant, $x_T$ is fully informative about $x^*$ while for a half interpolant, $x_T$ is still a noisy version of $x^*$. We will demonstrate that we can utilize either of these to develop sampling methods. When using half interpolants for sampling, the distribution $\nu$ will be set implicitly by the constraint that we want $x_T\sim\pi$ whereas, for full interpolants, we take $\nu$ equal to the target density $\pi$. First, we describe a sampling method using half interpolants and later extend it to the case of full interpolants.
\subsection{Sampling using half  interpolants}
Consider a half interpolant $x_t$ for $t\in[0,T]$. Note that it is not possible to obtain $x_t$ using (\ref{eqn:half_interpolant}),  since it requires knowledge about $\nu$, and samples $x^*$ from $\nu$. Instead, if we can implement a diffusion process $\{S_t\}_{t\in[0,T]}$ such that the distribution of $S_t$ is same as $x_t$ for all $t\in[0,T]$ and also enforce the condition that $S_T\sim\pi$, then we have a method that drives $S_0\sim\cN{0,r^2(0)I_d}$ to $S_T\sim\pi$. Constructing such processes is our aim here.  Let $\rho(t,\cdot)$ denote the probability density of $x_t$
\iffalse
Explicitly, $\rho$ is given by
\begin{align*}
    \rho(t,x) &= \frac{1}{(2\pi r^2(t))^{d/2}}\int_{\R^d}\nu(x^*)e^{-\frac{\norm{x-g(t)x^*}^2}{2r^2(t)}}dx^*.\\
\end{align*}
\fi
and $s(t,x) := \nabla\log{\rho(t,x)}$ denote the score funciton of the density $\rho$. The score function $s$ holds significant importance in generative modeling through diffusion processes. Notably, the score function is a pivotal component in our formulation as well. First, we present Lemma~\ref{lemma:pde_density} that characterizes the density function $\rho$ as a solution to certain PDEs \cite{albergo_stochastic_2023} (see Lemma~\ref{lemma_app:pde_density} in Appendix~\ref{sec:Proofs} for a proof).
\begin{lemma}\label{lemma:pde_density}
The probability density function of $x_t$ defined in (\ref{eqn:half_interpolant}) satisfies a PDE given by
\begin{equation}\label{eqn:first_order_pde}
    \partial_t\rho+\nabla\cdot(b\rho) = 0,\quad \rho(0,\cdot) = \cN{0,r^2(0)I_d},
    %\quad \rho(T,\cdot) = \pi(\cdot),
\end{equation}
where $b(t,x) = \dot g(t)\expectCond{x^*}{x_t=x}-r(t)\dot r(t)s(t,x)$. Equivalently, $\rho$ satisfies the following Fokker-Planck equation:
\begin{equation}\label{eqn:FPE}
    \partial_t\rho-\frac{\eps^2(t)}{2}\Delta\rho+\nabla\cdot\left(\left(b(t,x)+\frac{\eps^2(t)}{2}s(t,x)\right)\rho\right) = 0,%\quad \rho(0,\cdot) = \cN{0,r^2(0)I_d}.
\end{equation}
with initial condition $\rho(0,\cdot) = \cN{0,r^2(0)I_d}$.
\end{lemma}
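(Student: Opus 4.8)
The plan is to first establish the weak (distributional) form of the continuity equation (\ref{eqn:first_order_pde}) by testing against smooth, compactly supported functions, then recover the strong form, and finally deduce the Fokker--Planck form (\ref{eqn:FPE}) as a purely algebraic consequence. Before the computation I would record the regularity facts that make all manipulations legitimate: since $x_t = g(t)x^*+r(t)z$ with $r(t)>0$, the density is the Gaussian convolution $\rho(t,x)=\int_{\R^d}\nu(x^*)(2\pi r^2(t))^{-d/2}\exp\!\big(-\norm{x-g(t)x^*}^2/(2r^2(t))\big)\,dx^*$, which is smooth in $(t,x)$ for $t\in(0,T]$, strictly positive, and has Gaussian tails in $x$; hence $s=\nabla\log\rho$ is well defined and differentiation under the integral sign and spatial integration by parts against test functions are justified.

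The core step is a weak-form computation. Fix $\phi\in C_c^\infty(\R^d)$ and consider $t\mapsto\mathbb{E}[\phi(x_t)]$. On one hand $\mathbb{E}[\phi(x_t)]=\int_{\R^d}\phi(x)\rho(t,x)\,dx$, so its time derivative is $\int_{\R^d}\phi(x)\,\partial_t\rho(t,x)\,dx$. On the other hand, differentiating $\phi(g(t)x^*+r(t)z)$ inside the expectation,
\[
\frac{d}{dt}\mathbb{E}[\phi(x_t)] = \mathbb{E}\big[\nabla\phi(x_t)\cdot\big(\dot g(t)x^*+\dot r(t)z\big)\big].
\]
For the $\dot g$ term, since $\nabla\phi(x_t)$ is a function of $x_t$, the tower property gives $\mathbb{E}[\nabla\phi(x_t)\cdot x^*]=\mathbb{E}\big[\nabla\phi(x_t)\cdot\expectCond{x^*}{x_t}\big]$. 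For the $\dot r$ term, I would condition on $x^*$: given $x^*$, $x_t\sim\cN{g(t)x^*,r^2(t)I_d}$ and $z=(x_t-g(t)x^*)/r(t)$, so Gaussian integration by parts (Stein's identity) yields $\mathbb{E}[\nabla\phi(x_t)\cdot z\mid x^*]=r(t)\,\mathbb{E}[\Delta\phi(x_t)\mid x^*]$; averaging over $x^*$ and then integrating by parts in $x$ against $\rho$ (using $\nabla\rho=s\rho$) gives $\mathbb{E}[\nabla\phi(x_t)\cdot z]=-r(t)\,\mathbb{E}[\nabla\phi(x_t)\cdot s(t,x_t)]$. Combining the two,
\[
\frac{d}{dt}\mathbb{E}[\phi(x_t)] = \mathbb{E}\big[\nabla\phi(x_t)\cdot b(t,x_t)\big] = \int_{\R^d}\nabla\phi(x)\cdot b(t,x)\rho(t,x)\,dx,
\]
with $b(t,x)=\dot g(t)\expectCond{x^*}{x_t=x}-r(t)\dot r(t)s(t,x)$ exactly as in the statement. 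Integrating the right-hand side by parts in $x$ and equating with $\int\phi\,\partial_t\rho$ gives $\int_{\R^d}\phi(x)\big(\partial_t\rho+\nabla\cdot(b\rho)\big)\,dx=0$ for all $\phi\in C_c^\infty(\R^d)$, hence (\ref{eqn:first_order_pde}); the initial condition follows because $g(0)=0$ forces $x_0=r(0)z\sim\cN{0,r^2(0)I_d}$.

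For the Fokker--Planck form (\ref{eqn:FPE}) no new work is needed: since $s\rho=\nabla\rho$, we have $\frac{\eps^2(t)}{2}\Delta\rho=\frac{\eps^2(t)}{2}\nabla\cdot(s\rho)$, so $-\frac{\eps^2(t)}{2}\Delta\rho+\nabla\cdot\big((b+\frac{\eps^2(t)}{2}s)\rho\big)=\nabla\cdot(b\rho)$, and (\ref{eqn:FPE}) collapses to (\ref{eqn:first_order_pde}) for every choice of the free function $\eps$. I expect the one genuinely delicate point to be the $\dot r(t)z$ term --- the Gaussian integration by parts that exchanges the $z$ inside the expectation for the score $s$ --- together with verifying the tail and regularity conditions needed to differentiate under the integral and to integrate by parts; everything else is bookkeeping. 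As a fallback I would instead verify (\ref{eqn:first_order_pde}) directly by differentiating the explicit Gaussian-convolution formula for $\rho$ in $t$ and rearranging into divergence form, though that route is computationally heavier and less transparent.
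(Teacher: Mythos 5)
Your argument is correct and follows essentially the same route as the paper's proof: the paper computes $\partial_t\mathbb{E}[e^{i\langle k,x_t\rangle}]$ and inverts the Fourier transform, which is exactly your weak-form computation with the test function $\phi=e^{i\langle k,\cdot\rangle}$, and both arguments reduce to the tower property plus the identity $\expectCond{z}{x_t=x}=-r(t)s(t,x)$. The only difference is cosmetic: you derive that identity via Stein's lemma and an integration by parts against $\rho$, whereas the paper simply cites it as Tweedie's formula; the reduction of (\ref{eqn:FPE}) to (\ref{eqn:first_order_pde}) via $\nabla\cdot(s\rho)=\Delta\rho$ is identical in both.
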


Equation~\ref{eqn:first_order_pde} is the continuity equation for the density of particles in a velocity field $b$. This implies that a process $S_t$ with probability density $\rho(t,\cdot)$ can be obtained by solving an ODE with a random initial condition, given by
\begin{equation}\label{eqn:ODEforSampling}
dS_t = b(t,S_t)dt,\quad S_0\sim\cN{0,r^2(0)I_d}.    
\end{equation}
Similarly, equation~\ref{eqn:FPE} governs the evolution of the density of particles under a drift and diffusion component. That is, we can realize process $S_t$ by simulating the following SDE:
\begin{equation}\label{eqn:SDEforSampling}
   dS_t = \left(b(t,S_t)+\frac{\eps^2(t)}{2}s(t,S_t)\right)dt+\eps(t)dW_t,%\quad S_0\sim\cN{0,r^2(0)I_d},
\end{equation}
with $S_0\sim\cN{0,r^2(0)I_d}$, where $\{W_t\}_{t\in[0,T]}$ is a standard Brownian motion. Hence, it suffices to have access to functions $b$ and $s$ to implement a diffusion process $S_t$ that has the same marginal distribution as $x_t$. Moreover, Lemma~\ref{lemma:b_and_s} shows that having either $\expectCond{x^*}{x_t=x}$ or $s$ is sufficient to derive functions $b$ and $s$ (see Lemma~\ref{lemma_app:b_and_s} in Appendix~\ref{sec:Proofs} for a proof).
\begin{lemma}\label{lemma:b_and_s}
    Let $x_t$ be a linear or half interpolant. Let $s(t,x) = \nabla \rho(t,x)$ and $b(t,x) = \dot g(t)\expectCond{x^*}{x_t=x}-r(t)\dot r(t)s(t,x)$. Then, we have the following expressions for $b$ and $s$:
% \begin{equation}\label{eqn:b_and_s}
% \begin{split}
%   b(t,x) &= \begin{cases}
%       \frac{\dot r(t)}{r(t)}x+\left(\dot g(t)-\frac{g(t)\dot r(t)}{r(t)}\right)\expectCond{x^*}{x_t=x},\\
%       \frac{\dot g(t)}{g(t)}x+\left(r^2(t)\frac{\dot g(t)}{g(t)}-\dot r(t)r(t)\right)s(t,x)
%       \end{cases}\\
% s(t,x) &= \frac{g(t)\expectCond{x^*}{x_t=x}-x}{r^2(t)}.  
% \end{split}
% \end{equation}
\begin{align}\label{eqn:b_and_s}
\begin{split}
  b(t,x) &= \begin{cases}
      \frac{\dot r(t)}{r(t)}x+\left(\dot g(t)-\frac{g(t)\dot r(t)}{r(t)}\right)\expectCond{x^*}{x_t=x}\\
      \frac{\dot g(t)}{g(t)}x+\left(r^2(t)\frac{\dot g(t)}{g(t)}-\dot r(t)r(t)\right)s(t,x)
      \end{cases}, \\
s(t,x) &= \frac{g(t)\expectCond{x^*}{x_t=x}-x}{r^2(t)}.
\end{split}  
\end{align}
\end{lemma}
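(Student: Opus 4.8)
The plan is to first establish the pointwise identity linking the score $s(t,x)$ to the posterior mean $\expectCond{x^*}{x_t=x}$ — essentially a Tweedie-type formula for the Gaussian channel — and then read off the two expressions for $b$ by elementary algebra.

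I would start by writing the density of $x_t=g(t)x^*+r(t)z$ explicitly. Since $z\sim\cN{0,I_d}$ is independent of $x^*\sim\nu$, conditioning on $x^*$ makes $x_t$ Gaussian with mean $g(t)x^*$ and covariance $r^2(t)I_d$, so
\[
\rho(t,x)=\int_{\R^d}\nu(x^*)\,\frac{1}{(2\pi r^2(t))^{d/2}}\exp\!\left(-\frac{\norm{x-g(t)x^*}^2}{2r^2(t)}\right)dx^*.
\]
Differentiating in $x$ under the integral sign pulls down the factor $-(x-g(t)x^*)/r^2(t)$, and dividing by $\rho(t,x)$ expresses $\nabla\rho/\rho$ as an average of this factor against the posterior law of $x^*$ given $x_t=x$:
\[
s(t,x)=\nabla\log\rho(t,x)=\expectCond{-\frac{x-g(t)x^*}{r^2(t)}}{x_t=x}=\frac{g(t)\expectCond{x^*}{x_t=x}-x}{r^2(t)},
\]
which is the stated formula for $s$. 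The only point requiring care is the interchange of gradient and integral, justified by dominated convergence under the (mild, and implicitly assumed) integrability of $\nu$ against the Gaussian kernel and its $x$-derivative.

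With this identity in hand, both forms of $b$ follow from its definition $b(t,x)=\dot g(t)\expectCond{x^*}{x_t=x}-r(t)\dot r(t)s(t,x)$. For the first line of (\ref{eqn:b_and_s}), substitute $s=(g\expectCond{x^*}{x_t=x}-x)/r^2$ and collect the coefficients of $x$ and of $\expectCond{x^*}{x_t=x}$. For the second line, invert the score identity to get $\expectCond{x^*}{x_t=x}=(r^2(t)s(t,x)+x)/g(t)$ and substitute; this uses $g(t)>0$, which holds for linear interpolants and for half interpolants on $(0,T]$ (at $t=0$ the first representation must be used instead, since $g(0)=0$). Each reduction is a one-line computation, so the substantive content of the lemma is entirely in the score/posterior-mean identity, and I expect no real obstacle beyond the routine justification of differentiating under the integral.
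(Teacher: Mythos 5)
Your proposal is correct and follows essentially the same route as the paper: the paper also reduces everything to the Tweedie-type identity $\expectCond{z}{x_t=x}=-r(t)s(t,x)$ (obtained there by conditioning $x=g(t)\expectCond{x^*}{x_t=x}+r(t)\expectCond{z}{x_t=x}$ and citing the "direct computation", which is exactly your differentiation under the integral), and then derives the two forms of $b$ by the same algebra. Your explicit remark about using the first representation when $g(0)=0$ is a small but worthwhile addition.
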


Relying on the score $s$ to obtain $b$ presents a challenge due to the condition $g(0)=0$, causing the first term in the second expression for $b$ to diverge as $t$ approaches 0, resulting in numerical instability. Since there are no boundary conditions on $r$ for a half interpolant, a better strategy is to obtain the function $\expectCond{x^*}{x_t=x}$ and use the first and third expressions to estimate $b$ and $s$ respectively. Towards this, consider the expression for $\rho$:
\begin{align*}
    \rho(t,x) &= \frac{1}{(2\pi r^2(t))^{d/2}}\int_{\R^d}\nu(dx^*)e^{-\frac{\norm{x-g(t)x^*}^2}{2r^2(t)}}\\
    &= \psi(t,x)\int_{\R^d}\nu(dx^*)e^{\frac{g(t)}{r^2(t)}<x,x^*>-\frac{g^2(t)}{2r^2(t)}\norm{x^*}^2},
\end{align*}
where $\psi(t,x)=\frac{1}{(2\pi r^2(t))^{d/2}}e^{-\frac{\norm{x}^2}{2r^2(t)}}$. For some $\beta:[0,T]\rightarrow\R_+$, we define the function
\begin{align}\label{eqn:velocity}
\begin{split}
    u(t,x) &= \log\frac{\rho(t,\beta(t)x)}{\psi(t,\beta(t)x)}\\
    &= \log\int_{\R^d}\nu(dx^*)e^{\frac{\beta(t)g(t)}{r^2(t)}<x,x^*>-\frac{g^2(t)}{2r^2(t)}\norm{x^*}^2}.
\end{split}
\end{align}
Taking the gradient of $u$, we see that $\expectCond{x^*}{x_t=x} = \frac{r^2(t)}{\beta(t)g(t)}\nabla u(t,\frac{x}{\beta(t)})$.
Lemma~\ref{lemma:velociyt_HJB_PDE} characterizes $u$ as the solution to an HJB PDE (refer to Lemma~\ref{lemma_app_app:velociyt_HJB_PDE} in Appendix~\ref{sec:Proofs} for a proof).
\begin{lemma}\label{lemma:velociyt_HJB_PDE}
    Let $u:[0,T]\times\R^d:\rightarrow\R$ be the function given in (\ref{eqn:velocity}). Then $u$ satisfies the following Hamilton-Jacobi-Bellman equation
\begin{equation}\label{eqn:velocity_pde}
    \partial_t u + \frac{\sigma^2}{2}\Delta u + \frac{\sigma^2}{2}\norm{\nabla u}^2 + \mu^T\nabla u = 0,
\end{equation}
where $\mu(t,x) = -\partial_t\log\left(\frac{\beta(t)g(t)}{r^2(t)}\right)x$ and $\sigma^2(t) = 2\frac{r^2(t)}{\beta^2(t)}\left(\frac{\dot g(t)}{g(t)}-\frac{\dot r(t)}{r(t)}\right)\geq 0$. 
\end{lemma}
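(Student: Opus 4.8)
The plan is to write $u=\log F$ with
\[
F(t,x) \;=\; \int_{\R^d}\nu(dx^*)\,\exp\!\Big(\alpha(t)\langle x,x^*\rangle - \gamma(t)\norm{x^*}^2\Big),
\qquad \alpha(t) := \frac{\beta(t)g(t)}{r^2(t)},\quad \gamma(t) := \frac{g^2(t)}{2r^2(t)},
\]
and to reduce every derivative of $u$ to moments of $x^*$ under the tilted probability measure $\mathbb{P}_{t,x}(dx^*)\propto \nu(dx^*)\exp\big(\alpha(t)\langle x,x^*\rangle-\gamma(t)\norm{x^*}^2\big)$; I will write $\mathbb{E}_{t,x}$ for expectation under $\mathbb{P}_{t,x}$. (Differentiation under the integral sign is legitimate because the integrand and its $(t,x)$-derivatives are dominated by Gaussian-type bounds locally uniformly in $(t,x)$, using $g>0$ on $(0,T]$ and $\beta,r>0$.) First I would record the elementary identities $\nabla u = \alpha\,\mathbb{E}_{t,x}[x^*]$ and $\partial_{x_i}\partial_{x_j}u = \alpha^2\big(\mathbb{E}_{t,x}[x_i^*x_j^*]-\mathbb{E}_{t,x}[x_i^*]\,\mathbb{E}_{t,x}[x_j^*]\big)$, obtained by logarithmic differentiation of $F$. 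Summing the diagonal gives $\Delta u = \alpha^2\big(\mathbb{E}_{t,x}\norm{x^*}^2-\norm{\mathbb{E}_{t,x}[x^*]}^2\big)$, while $\norm{\nabla u}^2 = \alpha^2\norm{\mathbb{E}_{t,x}[x^*]}^2$, so the cross terms cancel:
\[
\Delta u + \norm{\nabla u}^2 \;=\; \alpha^2\,\mathbb{E}_{t,x}\norm{x^*}^2 .
\]

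Next I would compute the time derivative. Logarithmic differentiation in $t$ gives $\partial_t u = \dot\alpha\,\mathbb{E}_{t,x}\langle x,x^*\rangle - \dot\gamma\,\mathbb{E}_{t,x}\norm{x^*}^2$, and since $x$ is not random, $\mathbb{E}_{t,x}\langle x,x^*\rangle = \langle x,\mathbb{E}_{t,x}[x^*]\rangle = \alpha^{-1}x^{T}\nabla u$. Hence
\[
\partial_t u \;=\; \frac{\dot\alpha}{\alpha}\,x^{T}\nabla u \;-\; \dot\gamma\,\mathbb{E}_{t,x}\norm{x^*}^2 .
\]
Substituting $\mathbb{E}_{t,x}\norm{x^*}^2 = \alpha^{-2}\big(\Delta u + \norm{\nabla u}^2\big)$ from the previous display and rearranging yields
\[
\partial_t u \;+\; \frac{\dot\gamma}{\alpha^2}\Delta u \;+\; \frac{\dot\gamma}{\alpha^2}\norm{\nabla u}^2 \;-\; \frac{\dot\alpha}{\alpha}\,x^{T}\nabla u \;=\; 0 ,
\]
which is exactly the claimed equation with $\mu(t,x) = -\frac{\dot\alpha}{\alpha}x = -\partial_t\log\big(\frac{\beta g}{r^2}\big)\,x$ and $\sigma^2(t) = 2\dot\gamma/\alpha^2$.

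Finally I would verify that $2\dot\gamma/\alpha^2$ coincides with the stated expression: a one-line computation gives $\dot\gamma = \frac{d}{dt}\frac{g^2}{2r^2} = \frac{g^2}{r^2}\big(\frac{\dot g}{g}-\frac{\dot r}{r}\big)$ and $\alpha^2 = \beta^2 g^2/r^4$, so $2\dot\gamma/\alpha^2 = \frac{2r^2}{\beta^2}\big(\frac{\dot g}{g}-\frac{\dot r}{r}\big)$, and nonnegativity is immediate from $g/r$ being non-decreasing. I do not anticipate a genuine obstacle: the only delicate points are justifying differentiation under the integral and the bookkeeping in the second-derivative (covariance) computation for $\Delta u$; everything else is algebra. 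As a sanity check one can also note $\nabla u(t,x) = \alpha(t)\,\expectCond{x^*}{x_t=\beta(t)x}$, which is consistent with the relation between $u$ and $\expectCond{x^*}{x_t=x}$ already recorded before Lemma~\ref{lemma:velociyt_HJB_PDE}.
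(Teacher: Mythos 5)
Your proof is correct and follows essentially the same route as the paper's: direct differentiation of the log-integral, identification of $\nabla u$, $\Delta u$, and $\partial_t u$ as moments of the tilted measure, and algebraic recombination. In fact your writeup is slightly more complete, since the paper's proof lists the derivative formulas and leaves the final cancellation $\Delta u + \norm{\nabla u}^2 = \alpha^2\,\mathbb{E}_{t,x}\norm{x^*}^2$ and the matching of coefficients implicit, whereas you carry them out explicitly.
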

Note that (\ref{eqn:velocity_pde}) is a backward Kolmogorov PDE and can be solved given a terminal condition $u(T,\cdot)$. To constrain that $X_T\sim\pi$, we want $\rho(T,\cdot)=\pi(\cdot)$, which together with (\ref{eqn:velocity}) yields
\begin{equation}\label{eqn:terminal_cond_vel_pde}
 u(T, x)=\varphi(x) \equiv \log\frac{\pi(\beta(T)x)}{\psi(T,\beta(T)x)}.   
\end{equation}
Note that this condition also implicitly determines $\nu$. Here, for the sake of simplicity in presentation, we assumed that $\pi$ is a normalized density. However, it is important to note that it can be replaced by an unnormalized density $\hat{\pi}$. This would merely shift the solution $u$ by a constant, which would not impact the sampling algorithm, as only $\nabla u$ is required in the sampling phase. The function $\beta$ is a design parameter, which we can choose such that the coefficients of the PDE are well defined for $t\in[0,T]$. Note that simply choosing $\beta(t) = 1$ can cause $\mu$ and $\sigma$ to diverge as $t$ goes to $0$. One possible choice for $\beta$ is $\frac{r}{g}$, which makes the coefficients well defined for all $t\in[0,T]$. 

We emphasize that our aim is to obtain the function $\expectCond{x^*}{x_t=x} = \frac{r^2(t)}{\beta(t)g(t)}\nabla u(t,\frac{x}{\beta(t)})$ along the sample paths $S_t$ given by our sampling equation (\ref{eqn:SDEforSampling}). A prominent strategy in optimal-control literature to obtain $\nabla u$ is to formulate an optimization problem whose solution is $\nabla u$ (see discussion in Appendix~\ref{sec:oc_based_approach}). %(see Remark~\ref{remark:oc_approach}).
However, for reasons explained below, we pursue an alternative approach to solve PDE (\ref{eqn:velocity_pde}). Given that (\ref{eqn:velocity_pde}) is a non-linear parabolic PDE, as previously established, we can derive its solutions by solving the corresponding FBSDE. An FBSDE corresponding to PDE (\ref{eqn:velocity_pde}) can be formulated as:
\begin{align}\label{eqn:fbsde_half_interpolant}
\begin{split}   
    dX_t &= \left(\mu(t,X_t)+\sigma(t) Z_t\right)dt+\sigma(t)dW_t,\\
    %\\
    dY_t &= \frac{1}{2}\norm{Z_t}^2dt + Z_t^TdW_t,
\end{split}
\end{align}
with $X_0=\xi$ an appropriately chosen initial condition and $Y_T=\varphi(X_T)$. Lemma~\ref{lemma:pde_fbsde} relates the processes in (\ref{eqn:fbsde_half_interpolant}) to the solution of the PDE (\ref{eqn:velocity_pde}) under the terminal condition (\ref{eqn:terminal_cond_vel_pde}) (refer to Lemma~\ref{lemma_app:pde_fbsde} in Appendix~\ref{sec:Proofs} for a proof).
\begin{lemma}\label{lemma:pde_fbsde}
    Let $u:[0,T]\times\R^d:\rightarrow\R$ be the solution to PDE (\ref{eqn:velocity_pde}). Then the processes $Y_t$ and $Z_t$ in (\ref{eqn:fbsde_half_interpolant}) are given by $Y_t=u(t,X_t)$ and $Z_t=\sigma(t)\nabla u(t,X_t)$.
\end{lemma}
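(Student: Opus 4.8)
The plan is the classical verification argument: build a candidate solution of the FBSDE out of the PDE solution $u$, check via It\^o's formula that it indeed satisfies (\ref{eqn:fbsde_half_interpolant}) together with the terminal condition (\ref{eqn:terminal_cond_vel_pde}), and then invoke uniqueness of the FBSDE solution (the Pardoux--Peng theory recalled around (\ref{eqn:semiParaPDE})--(\ref{eqn:pardouxSDE})) to conclude it must equal $(X_t,Y_t,Z_t)$. Concretely, let $u\in C^{1,2}([0,T]\times\R^d)$ solve (\ref{eqn:velocity_pde}) with terminal data $\varphi$, let $X_t$ solve the (now Markovian) SDE obtained by substituting the guess $Z_t=\sigma(t)\nabla u(t,X_t)$ into the forward equation of (\ref{eqn:fbsde_half_interpolant}), namely $dX_t=\big(\mu(t,X_t)+\sigma^2(t)\nabla u(t,X_t)\big)dt+\sigma(t)dW_t$ with $X_0=\xi$, and set $\hat Y_t:=u(t,X_t)$, $\hat Z_t:=\sigma(t)\nabla u(t,X_t)$. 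There is no circularity here: once the guess is inserted, the effective forward drift is a bona fide function of $(t,x)$, so $X_t$, $\hat Y_t$, $\hat Z_t$ are well defined.

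Applying It\^o's formula to $u(t,X_t)$ and using $d\langle X^i,X^j\rangle_t=\sigma^2(t)\delta_{ij}\,dt$ gives
\[
d\hat Y_t=\Big(\partial_t u+\mu^T\nabla u+\sigma^2\norm{\nabla u}^2+\tfrac{\sigma^2}{2}\Delta u\Big)(t,X_t)\,dt+\sigma(t)\,\nabla u(t,X_t)^T dW_t .
\]
Rewriting PDE (\ref{eqn:velocity_pde}) as $\partial_t u+\tfrac{\sigma^2}{2}\Delta u+\mu^T\nabla u=-\tfrac{\sigma^2}{2}\norm{\nabla u}^2$, the drift above collapses to $\tfrac{\sigma^2}{2}\norm{\nabla u}^2=\tfrac{1}{2}\norm{\hat Z_t}^2$, so
\[
d\hat Y_t=\tfrac{1}{2}\norm{\hat Z_t}^2\,dt+\hat Z_t^T dW_t ,
\]
which is exactly the backward equation in (\ref{eqn:fbsde_half_interpolant}); moreover $\hat Y_T=u(T,X_T)=\varphi(X_T)$ by (\ref{eqn:terminal_cond_vel_pde}). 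Hence $(X_t,\hat Y_t,\hat Z_t)$ is an adapted solution of the FBSDE (\ref{eqn:fbsde_half_interpolant}).

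It remains to invoke uniqueness. PDE (\ref{eqn:velocity_pde}) has the quasi-linear parabolic form (\ref{eqn:semiParaPDE}) after matching terms, with forward drift $\mu(t,x)+\sigma(t)z$ and nonlinearity $f(t,x,y,z)=-\tfrac{1}{2}\norm{z}^2$ (recall $\sigma$ is scalar, so $\sigma^T\nabla u=\sigma\nabla u$; the particular split of the $\tfrac{\sigma^2}{2}\norm{\nabla u}^2$ term between drift and $f$ is the one producing precisely (\ref{eqn:fbsde_half_interpolant})). Under the regularity assumptions on $\mu,\sigma,f$ required by the Pardoux--Peng results, (\ref{eqn:fbsde_half_interpolant}) admits a unique adapted solution $(X_t,Y_t,Z_t)$, so the triple constructed above must coincide with it: $Y_t=u(t,X_t)$ and $Z_t=\sigma(t)\nabla u(t,X_t)$. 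The main obstacle is discharging these regularity hypotheses: It\^o's formula needs $u$ to be $C^{1,2}$ with derivatives of controlled growth (so that the guessed forward SDE is well posed and $\int_0^\cdot\sigma\,\nabla u^T dW$ is a genuine martingale), and the nonlinearity $f(t,x,y,z)=-\tfrac{1}{2}\norm{z}^2$ is quadratic in $z$, hence only locally Lipschitz and outside the scope of the standard Lipschitz-type uniqueness theorems. One therefore either restricts to the regime where $\nabla u$ — and hence the effective drift and $f$ — is bounded (which can be read off from the shape of $\varphi$ together with the parabolic smoothing of (\ref{eqn:velocity_pde})), or appeals to existence/uniqueness results tailored to quadratic FBSDEs. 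Granting that, the identification is just the one-line It\^o computation above.
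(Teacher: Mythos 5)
Your proof is correct and follows essentially the same route as the paper: the paper's own argument simply rewrites PDE (\ref{eqn:velocity_pde}) in the quasi-linear form (\ref{eqn:semiParaPDE}) with drift $\mu+\sigma z$ and nonlinearity $f=-\tfrac{1}{2}\norm{z}^2$ and then cites the PDE--FBSDE correspondence from the Preliminaries, which is exactly the It\^o verification plus uniqueness step you carry out explicitly. Your added remarks on the quadratic-in-$z$ nonlinearity and the regularity needed for the uniqueness theorem go beyond what the paper addresses and are a legitimate caveat, not a flaw in your argument.
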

\subsubsection{Solving the FBSDE}\label{sec:solving_fbsde}
We take a machine learning-based approach to solve the FBSDE (\ref{eqn:fbsde_half_interpolant}). In particular, we approximate the solution $u$ to the PDE (\ref{eqn:velocity_pde}) with a neural network $u^\theta$ paramterized by $\theta$. Subsequently, we obtain an approximation to $\nabla u$ by taking the gradient of $u^\theta$. These approximations enable us to approximate the processes $Y_t$ and $Z_t$. Thereafter, we can form a loss function based on the fact that $(X_t,Y_t,Z_t)$ satisfies the SDEs given in (\ref{eqn:fbsde_half_interpolant}). A subtle point in solving the above FBSDE (\ref{eqn:fbsde_half_interpolant}) is that the distribution of the process $X_t$ differs from that of the sampling process $S_t$ at each time instant $t$. Therefore, we might not obtain the function $\nabla u$ on the desired domain. We overcome this difficulty by noting that FBSDE (\ref{eqn:fbsde_half_interpolant}) is a local equation in the sense that it should be satisfied at each time instant $t$. In particular, we maintain a separate process $X_t$ in order to ensure that we learn $u$ and $\nabla u$ on the appropriate domain, and use a penalty term in the loss (Equ. \eqref{lossFBSDElocal}) which enforces constraints arising from FBSDE locally in time. This is a major advantage of FBSDE over optimal control based approaches for solving HJB PDEs.  Specifically, for a $\tau$ chosen uniformly at randomly from $[0,T]$ we set, $X=X_\tau$, 
$Y=u^\theta(\tau,X_\tau)$, $Z=\sigma(\tau)\nabla u^\theta(\tau,X_\tau)$, where $X_\tau$ is the solution to the following ODE (for $t=\tau)$:
\begin{equation}\label{eqn:fbsde_init_process}
    \frac{dX_t}{dt} = \frac{\dot r(t)}{r(t)}X_t+\left(\frac{\dot g(t)}{g(t)}-\frac{\dot r(t)}{r(t)}\right)\frac{r^2(t)}{\beta(t)}\nabla u^\theta\left(t,\frac{X_t}{\beta(t)}\right),
    %\frac{dX_t}{dt} = \frac{\dot r(t)}{r(t)}X_t+\left(\dot g(t)-\frac{g(t)\dot r(t)}{r(t)}\right)\frac{r^2(t)}{\beta(t)g(t)}\nabla u^\theta\left(t,\frac{X_t}{\beta(t)}\right),%\quad X_0\sim\cN{0,r^2(0)I_d}.
\end{equation}
with $X_0\sim\cN{0,r^2(0)I_d}$ and we 
form the following $\delta$-step discretization of the FBSDE as:
\begin{align}\label{eqn:fbsde_discretized_half_int}
\begin{split}
        \hat{X}_\delta &= X + \left(\mu(\tau,X)+\sigma(\tau)Z\right)\delta+\sigma(\tau)\sqrt{\delta}w,\\
       \hat{Y}_\delta &= Y + \frac{1}{2}\norm{Z}^2\delta+\sqrt{\delta}Z^Tw,
\end{split}
\end{align}
% \end{wrapfigure}    
where $w\sim\cN{0,I_d}$. 
The ODE in (\ref{eqn:fbsde_init_process}) coincides with the ODE in (\ref{eqn:ODEforSampling}) when $\nabla u^\theta$ equals $\nabla u$. Therefore, at optimum, we ensure that $\nabla u$ is learned over an appropriate domain. With an appropriately choosen $\lambda>0$, we use the equations (\ref{eqn:fbsde_discretized_half_int}), the quantity $Y_\delta = u^\theta(\tau+\delta,\hat{X}_\delta)$, and the boundary conditions of the FBSDE (\ref{eqn:fbsde_half_interpolant}), to form the loss function as follows:
\begin{equation}\label{lossFBSDElocal}
\cL(\theta) = \mathbb{E}{ \norm{\nabla u^\theta(T,X_{T})-\nabla\varphi(X_T)}^2}+\lambda\mathbb{E}{\left(\hat{Y}_{\delta}-Y_{\delta}\right)^2}. %\norm{u^\theta(T,X_{T})-\varphi(X_T)}^2 +   
\end{equation}
The expectations above are over the process $X_\tau$ (given $\tau$), the random time $\tau$, and $w$. Note that since we are more interested in the quantity $\nabla u$ than $u$, we formed the loss function in terms of the gradient of the terminal condition.  We find the approximate solution to FBSDE (\ref{eqn:fbsde_half_interpolant}) and in turn to the PDE (\ref{eqn:velocity_pde}) by minimizing $\cL(\theta)$ over $\theta$. In practice, we discretize the ODE (\ref{eqn:fbsde_init_process}) using the Euler-Maruyama scheme. We observe that the discretization error in this step does not directly contribute to the training error, as it only affects the precise path of $X_t$. We present the steps to compute loss in Procedure~\ref{proc:loss_half_int} in Appendix \ref{apndx:half_interpolant}.

\subsubsection{Sampling}
Once we have an estimate of $\nabla u$, we can derive the functions $b$ and $s$ by utilizing equations (\ref{eqn:b_and_s}). Substituting $\expectCond{x^*}{x_t=x}=\frac{r^2(t)}{\beta(t)g(t)}\nabla u(t,\frac{x}{\beta(t)})$, we get
\begin{align*}
    b(t,x) &= \frac{\dot r(t)}{r(t)}x+\frac{r^2(t)}{\beta(t)}\left(\frac{\dot g(t)}{g(t)}-\frac{\dot r(t)}{r(t)}\right)\nabla u\left(t,\frac{x}{\beta(t)}\right),\\
    s(t,x) &= \frac{1}{\beta(t)}\nabla u\left(t,\frac{x}{\beta(t)}\right) -\frac{x}{r^2(t)}.
\end{align*}
Subsequently, we can use either ODE (\ref{eqn:ODEforSampling}) or SDE (\ref{eqn:SDEforSampling}) for sampling. We outline the steps for sampling in Procedure~\ref{proc:sample_half_int} in Appendix \ref{apndx:half_interpolant}.

\iffalse
\begin{remark}\label{remark:oc_approach}
We remark that $\nabla u$ could a priori be obtained as a solution to an optimization problem. In particular, let $u$ be the solution to PDE (\ref{eqn:velocity_pde}) and for an $m\in C^1([0,T]\times\R^d,\R^d)$ let the process $X_t$ be generated by the SDE $dX_t = \left(\sigma^2 m(t,X_t)+\mu(t,X_t)\right)dt+\sigma(t) dW_t$. Then, we have 
\begin{equation}\label{eqn:oc_optimization}
    \nabla u = \argmin{m} \expectCond{\frac{1}{2}\int_0^T\sigma^2(t)\norm{m(s,X_s)}^2ds-\varphi(X_T)}{X_0}.
\end{equation}
Typically, $m$ is parameterized by a neural network, and its parameters are learned by solving the minimization problem in (\ref{eqn:oc_optimization}). However, observe that the domain over which the function $\nabla u$ is learned depends on the distribution of the diffusion process $X_t$ for the optimal $m$. In our case, this presents issues since the diffusion process employed for sampling (\ref{eqn:SDEforSampling}) is different from $X_t$. Hence, if solved using the optimal control approach, $\nabla u$ would be learned over a domain that differs from what is required.
\end{remark}
\fi

\subsection{Sampling using full interpolants}\label{sec:full_interpolant}
%In the preceding section, we explored the process of sampling using a half interpolant, which entails solving an HJB PDE. 
In this section, we introduce a method to sample from probability distributions using a full interpolant. Merely employing the previous method is destined to fail, as the coefficients of the PDE tend towards infinity if we impose the condition $r(T)=0$ required by the full interpolant.  Instead, we propose a two-step approach that involves solving two HJB PDEs; one for $t\in[0,T']$ and the other for $t\in[T',T]$ with $T'\in(0,T)$. The idea is to obtain $\expectCond{x^*}{x_t=x}$ for $t\in[0,T']$ and the score $s$ for $t\in[T',T]$. This is due to the observation that it is possible to form well-behaved PDEs for a quantity related to $\expectCond{x^*}{x_t=x}$ for $t\in[0,T']$ and similarly, for a quantity related to $s$ for $t\in[T',T]$. Since $b$ can be derived from either $\expectCond{x^*}{x_t=x}$ or $s$, we acquire the functions essential for sampling for all $t$ within the whole interval $[0,T]$.

First, let us consider $t$ in the interval $[T',T]$. For some function $\alpha:[T',T]\rightarrow R_+$, let $v(t,x) = \log\rho(t,\alpha(t)x)+d\log g(t)$. Taking the gradient of $v$, we observe that $v$ is related to the score by $s(t,x )=\frac{1}{\alpha(t)}\nabla v(t,\frac{x}{\alpha(t)})$. Lemma~\ref{lemma:score_pde} shows that $v$ satisfies an HJB PDE (see Lemma~\ref{lemma_app:score_pde} in Appendix~\ref{sec:Proofs} for a proof).
\begin{lemma}\label{lemma:score_pde}
    Let $v(t,x) = \log\rho(t,\alpha(t)x)+d\log g(t)$. Then $v$ satisfies the following Hamilton-Jacobi-Bellman equation
\begin{equation}\label{eqn:score_pde}
    \partial_t v+\frac{\bar{\sigma}^2}{2}\Delta v+\frac{\bar{\sigma}^2}{2}\norm{\nabla v}^2+\bar{\mu}^T\nabla v=0,
\end{equation}
where $\bar{\mu}(t,x) = \partial_t\log\left({\frac{g(t)}{\alpha(t)}}\right)x$ and $\bar{\sigma}^2 = 2\frac{r^2(t)}{\alpha^2(t)}\left(\frac{\dot g(t)}{g(t)}-\frac{\dot r(t)}{r(t)}\right)$.
\end{lemma}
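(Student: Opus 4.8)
The plan is to verify Lemma~\ref{lemma:score_pde} by direct substitution, mimicking the derivation of Lemma~\ref{lemma:velociyt_HJB_PDE} but now tracking the score (rather than the conditional-mean) quantity. Recall from Lemma~\ref{lemma:pde_density} that $\rho$ satisfies the continuity equation $\partial_t\rho+\nabla\cdot(b\rho)=0$, equivalently (choosing $\eps^2=0$) $\partial_t\rho = -\nabla\cdot(b\rho)$, with $b(t,x)=\dot g(t)\expectCond{x^*}{x_t=x}-r(t)\dot r(t)s(t,x)$. Using the second expression for $b$ from Lemma~\ref{lemma:b_and_s}, namely $b(t,x)=\frac{\dot g}{g}x+(r^2\frac{\dot g}{g}-\dot r r)s(t,x)$, I would first rewrite the Fokker-Planck/continuity equation for $\rho$ purely in terms of $\log\rho$ and $s=\nabla\log\rho$. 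Expanding $\nabla\cdot(b\rho)=\rho\,\nabla\cdot b + b\cdot\nabla\rho$ and dividing by $\rho$ gives an equation of the form $\partial_t\log\rho = -\nabla\cdot b - b\cdot\nabla\log\rho$, where $\nabla\cdot b$ contributes a $\frac{\dot g}{g}d$ term plus a term proportional to $\nabla\cdot s = \Delta\log\rho$, and $b\cdot\nabla\log\rho$ contributes the gradient-squared term. So already at the level of $\log\rho$ one sees the Laplacian, the $\|\nabla\log\rho\|^2$, and a linear drift emerge.

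Next I would carry out the change of variables $v(t,x)=\log\rho(t,\alpha(t)x)+d\log g(t)$. This requires the chain rule: $\partial_t v = (\partial_t\log\rho)(t,\alpha(t)x) + \dot\alpha(t)\,x\cdot(\nabla\log\rho)(t,\alpha(t)x) + d\frac{\dot g}{g}$; $\nabla v(t,x)=\alpha(t)(\nabla\log\rho)(t,\alpha(t)x)$, hence $(\nabla\log\rho)(t,\alpha x)=\frac1\alpha\nabla v$; and $\Delta v(t,x)=\alpha^2(t)(\Delta\log\rho)(t,\alpha(t)x)$, hence $(\Delta\log\rho)(t,\alpha x)=\frac1{\alpha^2}\Delta v$. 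Substituting the equation for $\partial_t\log\rho$ derived in the first step, evaluated at the point $\alpha(t)x$, and then replacing every occurrence of $\nabla\log\rho$, $\Delta\log\rho$ by the corresponding $v$-quantities, collecting terms: the $\Delta\log\rho$ term becomes $\frac{\bar\sigma^2}{2}\Delta v$ with $\bar\sigma^2 = 2\frac{r^2}{\alpha^2}(\frac{\dot g}{g}-\frac{\dot r}{r})$ (the factor $r^2\frac{\dot g}{g}-\dot r r = r^2(\frac{\dot g}{g}-\frac{\dot r}{r})$ from $b$, times $1/\alpha^2$ from the change of variables, times an overall sign bookkeeping); the $\|\nabla\log\rho\|^2$ term becomes $\frac{\bar\sigma^2}{2}\|\nabla v\|^2$; and the remaining linear-in-$x$ pieces — the $\frac{\dot g}{g}x\cdot\nabla\log\rho$ from $b$ together with the $\dot\alpha\,x\cdot\nabla\log\rho$ from the time-derivative of the change of variables — combine, after converting to $\nabla v$, into $\bar\mu^T\nabla v$ with $\bar\mu(t,x)=(\frac{\dot g}{g}-\frac{\dot\alpha}{\alpha})x = \partial_t\log(\frac{g}{\alpha})x$. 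Finally the constant terms $-d\frac{\dot g}{g}$ from $\nabla\cdot b$ and $+d\frac{\dot g}{g}$ from $\partial_t(d\log g)$ cancel exactly, which is precisely the role of the $d\log g(t)$ additive correction in the definition of $v$.

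The main obstacle I anticipate is purely bookkeeping: getting the signs and the coefficient algebra right when passing from the conservative form $\nabla\cdot(b\rho)$ to the advective form in $\log\rho$, since $\nabla\cdot(b\rho)/\rho = \nabla\cdot b + b\cdot\nabla\log\rho$ and one must be careful that the $b\cdot\nabla\log\rho$ term, with the $s$-proportional part of $b$, produces $(r^2\frac{\dot g}{g}-\dot r r)\|\nabla\log\rho\|^2$ while the $\nabla\cdot(s\text{-part})$ produces $(r^2\frac{\dot g}{g}-\dot r r)\Delta\log\rho$ — the two share the same prefactor, which is exactly what forces the $\frac{\bar\sigma^2}{2}$ to be common to both the Laplacian and the gradient-squared term, as an HJB equation demands. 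A secondary check worth making explicit is that one may freely set $\eps^2=0$ here (use the continuity equation) without loss, since $v$ is defined through $\rho$ alone and $\rho$ is the same regardless of the auxiliary $\eps$. Once the coefficient identifications $\bar\mu=\partial_t\log(g/\alpha)x$ and $\bar\sigma^2=2\frac{r^2}{\alpha^2}(\frac{\dot g}{g}-\frac{\dot r}{r})$ are confirmed, the lemma follows. I would relegate the full term-by-term expansion to the appendix proof (Lemma~\ref{lemma_app:score_pde}) and in the main text only indicate that it is a direct computation parallel to Lemma~\ref{lemma:velociyt_HJB_PDE}.
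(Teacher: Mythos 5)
Your proposal is correct and follows essentially the same route as the paper's proof: write $\partial_t\log\rho = -\nabla\cdot b - b^T\nabla\log\rho$ from the continuity equation, substitute the score-based expression $b=\frac{\dot g}{g}x+r^2\bigl(\frac{\dot g}{g}-\frac{\dot r}{r}\bigr)s$, apply the chain rule for the rescaling $x\mapsto\alpha(t)x$, and observe the cancellation of the $d\,\dot g/g$ terms against the $d\log g(t)$ correction. The coefficient identifications for $\bar\sigma^2$ and $\bar\mu$ check out exactly as in the appendix proof.
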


For sampling, we need $x_T$ distributed as $\pi$. This provides us with the terminal condition
\begin{equation}\label{eqn:score_term_condition}
 v(T,x) = \varphi(x)\equiv \log\pi(\alpha(T)x)+d\log g(T).   
\end{equation}
 Similar to $\beta$ in the case of half interpolants, we can select $\alpha$ such that the PDE (\ref{eqn:score_pde}) is well defined for all $t$ in $[T',T]$. Here, simply choosing $\alpha = 1$ also suffices.

 For $t$ in $[0,T']$, it is natural to employ the function $u$ used in the half interpolant-based sampler. Therefore, we have to solve the PDE (\ref{eqn:velocity_pde}), this time however, subjected to a terminal condition dictated by the solution to PDE (\ref{eqn:score_pde}). The terminal condition is given by $\varphi'(x)= \log\frac{\rho(T',\beta(T')x)}{\psi(T',\beta(T')x)} =  v\left(T',\frac{\beta(T')x}{\alpha(T')}\right)-\log\psi(T',\beta(T')x)-d\log g(T').$ For completeness, we restate the PDE below:
\begin{equation}\label{eqn:pde_velocity_interpolant}
    \partial_t u + \frac{\sigma^2}{2}\Delta u + \frac{\sigma^2}{2}\norm{\nabla u}^2 + \mu^T\nabla u = 0,\quad u(T',x) = \varphi'(x).
\end{equation}
\iffalse
\begin{equation}
    \varphi'(x)= v\left(T',\frac{\beta(T')x}{\alpha(T')}\right)-\log\psi(T',\beta(T')x)-d\log g(T').
\end{equation}
\fi
To estimate $u$ and $v$, we need to solve PDEs (\ref{eqn:pde_velocity_interpolant}) and (\ref{eqn:score_pde}) subjected to the terminal conditions $\varphi'$ and $\varphi$ respectively. We again rely on solving the associated FBSDEs for solving the PDEs. The details of FBSDEs and the loss function for solving them can be found in Appendix~\ref{apndx:full_interpolant}.
%, although here we could have opted for the optimal control-based approach without any difficulty.

\section{Numerical Results}
We evaluated our methods on several distributions from the literature, that are considered challenging to sample from. More details of our implementation can be found in Section~\ref{apndx:implemnetation_details}. Below, we provide some distributions for which we present our results. Let $\Psi(x;\mu,\Sigma)$ denote the probability density of the $d$-dimensional Gaussian distribution with mean $\mu$ and covariance $\Sigma$. 
\iffalse
\begin{table}
  \caption{Distributions}
  \label{tab:dists}
  \centering
  \begin{tabular}{m{0.3\textwidth}m{0.4\textwidth}m{0.2\textwidth}}
    \toprule
     Distribution & PDF & Parameters\\
     \midrule
    Gaussian Mixture Model     & $\pi(x) = \frac{1}{M}\sum_{i=1}^{M} \Psi(x;\mu_i,\Sigma_i)$  & $d=2$    \\
             Funnel     &    $\pi(x) = \Psi(x_1;0,3)\Psi(x_2^d;0,e^{x_1}I_{d-1})$     & $d=10$ \\
   Double Well     & $\pi(x) = \sum_{i=1}^{M} \Psi(x;\mu_i,\Sigma_i)$  & $d=2$    \\
    \bottomrule
  \end{tabular}
\end{table}
\fi
% \begin{wraptable}[10]{R}{0.6\textwidth}
\begin{figure*}[ht]
  \centering
  \includegraphics[scale = 0.38]{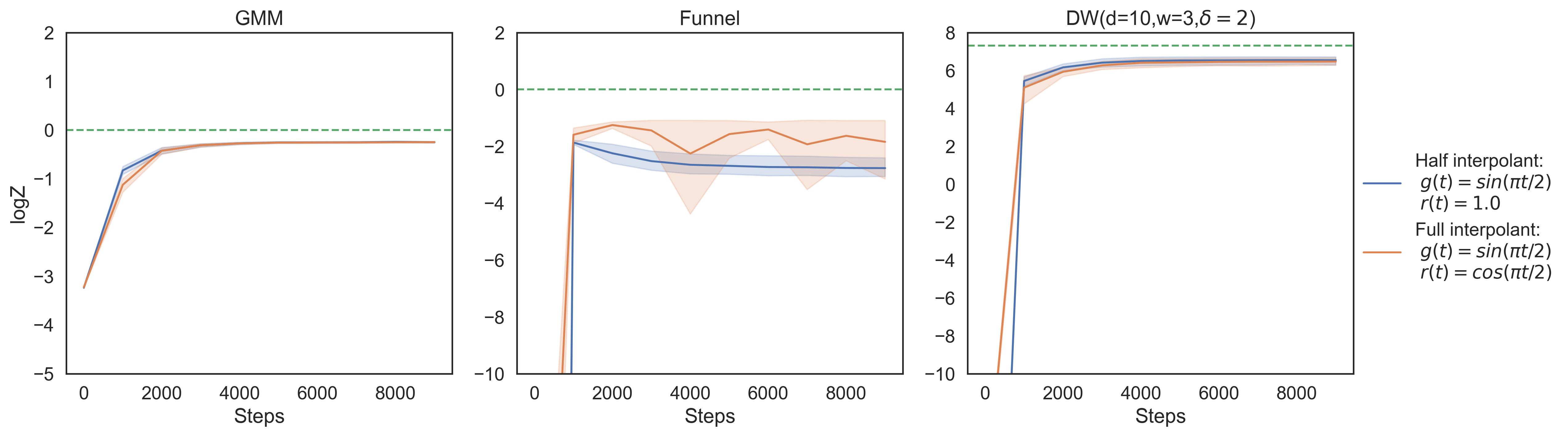}
  \caption{Estimates of $\log Z$ as a function of training steps along with $95\%$ confidence intervals.}
  \label{fig: results}
\end{figure*}

\paragraph{Gaussian mixture model (GMM):} The density of a Gaussian mixture model is given by $\pi(x) = \frac{1}{M}\sum_{i=1}^{M} \Psi(x;\mu_i,\Sigma_i)$. In our experiments, we use $M=9, \{\mu_i\}_{i=1}^9 = \{-5,0,5\}^2$ and $\Sigma_i = 0.3I_2$ for all $i$ (here $d=2$).
\paragraph{Neal's Funnel distribution:} Probability density of the funnel distribution is given by $\pi(x) = \Psi(x_1;0,s^2)\Psi(x_2, x_3,\dots, x_d;0,e^{x_1}I_{d-1})$. The values of the parameters are $d=10, s=3$. Reference \cite{neal_slice_2003} introduces this distribution and discusses the behavior of MCMC implementations. 
\paragraph{Double Well (DW):} The probability density of double well is given by $\pi(x) = \frac{1}{Z}\exp\left(-\sum_{i=1}^w(x_i^2-\delta)^2-\frac{1}{2}\sum_{i=w+1}^d x_i^2\right)$.

\paragraph{Soft Spherical Spin Glass model :} We consider a spin glass model whose probability density is given by $\pi(x) \propto \exp\{\frac{\beta}{\sqrt{2d}}x^TAx-\frac{\beta^2}{4d}\norm{x}^4-\frac{1}{2}\norm{x}^2\}$, where $A\in\R^{d\times d}$ with i.i.d. Gaussian entries.

We use our sampler for estimating different quantities associated to the above distributions. We estimate the \textit{log normalization constant} ($\log Z$) of the distribution using the method explained in Appendix~\ref{apndx:est_logZ}. Other quantities we consider are mean $L_1$-norm, and mean squared $L_2$-norm, whose estimates are obtained from samples using empirical average. We stress that our estimates are computed without using the importance sampling technique. Hence they reflect the `true' quality of the samples. In practice, we observe that the full interpolant-based sampling method gives better results compared to the half interpolant-based sampler. We mention that the full interpolant-based sampling method that we presented can in fact work with half interpolant functions as well. Figure~\ref{fig: results} shows the estimates of log normalization constant as a function of the training steps, given by the full interpolant-based sampling method. The interpolants used are visualized in Figure~\ref{fig:linearInterpolants}.

Additional numerical results can be found in Appendix~\ref{apndx:additionalExps}. In Section~\ref{sec:gaussian_spin_glass}, we use our sampler to confirm a predicted phase transition in the soft spherical spin glass model. In Section~\ref{sec:comparison_gbs} we compare our method against another diffusion based sampler called Generalized Bridge Sampler \cite{richter_improved_2023,blessing_beyond_2024}. In Table~\ref{tab:ablation} we show how different parameters affect the performance of our sampler.

\section{Discussion and Future Work}\label{sec:discussion}
We presented a class of diffusion-based algorithms for sampling from unnormalized densities that can obtain samples in finite time. An existing diffusion-based method that can achieve the same is the Path Integral Sampler (PIS) \cite{zhang_path_2022}. However, PIS is obliged to have a Dirac distribution as its prior. On the other hand, diffusion-based samplers such as DDS \cite{vargas_denoising_2022} and DIS \cite{berner_optimal_2023} are more closely related to score-based generative modeling techniques. They can accommodate non-Dirac distributions as their prior, but ideally require an infinite amount of time for convergence. Our method circumvents these limitations.  Our approach which is based on the stochastic interpolants framework permits Gaussian initial distribution on top of producing samples in finite time. We would like to point out that the F\"ollmer process that the PIS implements is same as the diffusion process that our half interpolant-based method realizes when the interpolant functions take the form $g(t)= f(t)$ and $r(t) = \sqrt{f(t)}$ for some positive function $f$ with $f(0)=0$.

The training phase of PIS, DDS, and DIS involves computing gradients over the paths generated by an SDE (Neural SDE), which can become computationally expensive as we decrease the discretization step-size of the SDE. However, our method does not suffer from this limitation as we detach the process (\ref{eqn:fbsde_init_process}) from the computational graph, and also the discretization step-size of the ODE \ref{eqn:fbsde_init_process} does not significantly impact the final accuracy. An extension of DIS presented in \cite{richter_improved_2023} also detaches path gradients from the computational graph, making it computationally less expensive. This is achieved by using log-variance divergence as the loss function.

A limitation of our method currently is that it is not evident if our construction allows for the computation of important weights required to correct the estimates obtained using the samples. This is an important aspect for further investigation. Lastly, we mention that the performance of our sampler can vary significantly based on the values of the hyperparameters used in the implementation. Our primary focus in this work is to present ideas that offer a class of more versatile diffusion-based sampling algorithms, while also offering a fresh perspective on the existing ones.

\section*{Acknowledgements}
The work of A. J. G has been supported by Swiss National Science Foundation grant number 200021-204119. 

\bibliography{references.bib} 
\newpage
%%%%%%%%%%%%%%%%%%%%%%%%%%%%%%%%%%%%%%%%%%%%%%%%%%%%%%%%%%%%

\appendix

\onecolumn
\aistatstitle{Supplementary Materials}
\section{Algorithms for Half Interpolant Sampler}\label{apndx:half_interpolant}
We compile the steps required to form the loss for half interpolant sampler into an algorithm. First, we define a sub-routine~\ref{proc:loss_fbsde_half_int} that generates the component in the loss due to FBSDE equations. This subroutine will also be used in the full interpolant sampler. 
\begin{procedure}
\DontPrintSemicolon
	\KwIn{$t,X,m,\delta,f,h$}
	\KwOut{L}
    $w\sim\cN{0,I_d}$\;
    $Z \gets h(t)\nabla m(t,X)$\;
    $\hat{Y} \gets m(t,X)+\frac{1}{2}\norm{Z}^2\delta+\sqrt{\delta}{Z}^Tw$\; 
    $X \gets X + \left(f(t,X)+h(t)Z\right)\delta+h(t)\sqrt{\delta}w$\;
    $Y \gets m(t+\delta,X)$\;
    $L \gets \left(\hat{Y}-Y\right)^2$\;
\Return{L}
\caption{LossFBSDE($t,X,m,\delta,f,h$)}\label{proc:loss_fbsde_half_int}
\end{procedure} 
Next, we present the procedure~\ref{proc:loss_half_int} to generate the overall loss for the half interpolant sampler. Here, we assume that the batch size equals 1. In practice, we use multiple realizations of $X$ and $\{t_i\}_{i=1}^{N-1}$ to form a single batch. This loss is subsequently minimized to find the optimal $\theta$.
\begin{procedure}
\DontPrintSemicolon
	\KwIn{$\theta,T,N,\delta,\lambda$}
	\KwOut{$\cL$}
$t_0 \gets 0, t_N \gets T$\;
$(t_1\le t_2\le\cdots\le t_{N-1}) \gets U([0,T])^{N-1}$\; %\Comment*[r]{Sorted}
$X\sim\cN{0,r^2(0)I_d}$\;
$\cL \gets 0$\;
\For{$i = 0,\cdots,N-1$}{
    $\cL \gets \cL + \lambda*\text{LossFBSDE}(t_i,X,u^\theta,\delta,\mu,\sigma)$\;
    $X \gets X + \left(\frac{\dot r(t_i)}{r(t_i)}X+\left(\dot g(t_i)-\frac{g(t_i)\dot r(t_i)}{r(t_i)}\right)\frac{r^2(t_i)}{\beta(t_i)g(t_i)}\nabla u^\theta\left(t_i,\frac{X}{\beta(t_i)}\right)\right)(t_{i+1}-t_i)$\;
}
$\cL\gets \cL + \norm{\nabla u^\theta(T,X)-\nabla\varphi(X)}^2$\;
\Return{$\cL$}	\caption{LossHalfInterpolant()}\label{proc:loss_half_int}
\end{procedure}
Once we have found the optimal $\theta$, we can sample from the target distribution by simulating the SDE given in (\ref{eqn:SDEforSampling}). Procedure~\ref{proc:sample_half_int} describes the steps involved in sampling.
\begin{procedure}
\DontPrintSemicolon
	\KwIn{$\theta^*,T,N',\eps$}
	\KwOut{$S$}
$\Delta \gets \frac{T}{N'}$\;
$S\sim\cN{0,r^2(0)I_d}$\;
\For{$i = 0,\cdots,N'-1$}{
    $t = i*\Delta$\;
    $w\sim\cN{0,I_d}$\;
    $b \gets \frac{\dot r(t)}{r(t)}S+\frac{r^2(t)}{\beta(t)}\left(\frac{\dot g(t)}{g(t)}-\frac{\dot r(t)}{r(t)}\right)\nabla u^{\theta^*}\left(t,\frac{S}{\beta(t)}\right)$\;
    $s \gets \frac{1}{\beta(t)}\nabla u^{\theta^*}\left(t,\frac{S}{\beta(t)}\right) -\frac{S}{r^2(t)}$\;
    $S \gets S + \left(b+\frac{\eps^2(t)}{2}s\right)\Delta+\sqrt{\Delta}\eps(t)w$\;
}
\Return{$S$}	\caption{SampleHalfInterpolant()}\label{proc:sample_half_int}
\end{procedure} 
% \end{wrapfigure}

\section{Details of the Full Interpolant-based Sampler}\label{apndx:full_interpolant}
In Section~\ref{sec:full_interpolant}, we have seen that in a full interpolant-based sampler, we have to solve two HJB PDEs given by (\ref{eqn:pde_velocity_interpolant}) and (\ref{eqn:score_pde}) under respective terminal conditions. Similar to the half interpolant-based sampler, we solve the PDEs by solving associated FBSDEs. An FBSDE corresponding to PDE (\ref{eqn:pde_velocity_interpolant}) can be written as
% and that of FBSDE (\ref{eqn:fbsde_score_interpolant}) as
\begin{equation}\label{eqn:fbsde_velocity_interpolant}
\begin{split}
    dX_t &= \left(\mu(t,X_t)+\sigma(t) Z_t\right)dt+\sigma(t)dW_t,\quad X_0 = \xi\\
    dY_t &= \frac{1}{2}\norm{Z_t}^2dt + Z_t^TdW_t,\quad Y_{T'} = \varphi'(X_{T'}),
\end{split}
\end{equation}
while for PDE (\ref{eqn:score_pde}), an FBSDE is
\begin{equation}\label{eqn:fbsde_score_interpolant}
\begin{split}   
    dX_t &= \left(\bar{\mu}(t,X_t)+\bar{\sigma}(t) Z_t\right)dt+\bar{\sigma}(t)dW_t,\quad X_{T'} = \xi'\\
    dY_t &= \frac{1}{2}\norm{Z_t}^2dt + Z_t^TdW_t,\quad Y_{T} = \varphi(X_T).
\end{split}
\end{equation}

\subsection{Solving the FBSDEs}
As in the case of half interpolants, we take a machine learning-based approach to solve FBSDEs (\ref{eqn:fbsde_score_interpolant}, \ref{eqn:fbsde_velocity_interpolant}). We approximate the solution to the PDEs (\ref{eqn:pde_velocity_interpolant}, \ref{eqn:score_pde}) $u,v$ with neural networks $u^\theta$ and $v^{\theta'}$ paramterized by $\theta$ and $\theta'$ respectively. Subsequently, we obtain approximations to processes $(Y_t, Z_t)$, which along with the FBSDEs (\ref{eqn:fbsde_score_interpolant},\ref{eqn:fbsde_velocity_interpolant}) help us form a loss function. Specifically, for $\tau,\tau'$ chosen independently and uniformly at randomly from $[0,T']$ and $[T',T]$ respectively,  we form the $\delta$-step discretization of the FBSDE (\ref{eqn:fbsde_velocity_interpolant}) as 
%and FBSDE (\ref{eqn:fbsde_score_interpolant}) as:
% \begin{minipage}[t]{0.5\textwidth}
\begin{align*}
       X &= X_\tau,\quad Z = \sigma(\tau)\nabla u^\theta(\tau,X),\\
       \hat{Y}_\delta &= u^\theta(\tau,X)+\frac{1}{2}\norm{Z}^2\delta+\sqrt{\delta}Z^Tw,\\
    \hat{X}_\delta &= X + \left(\mu(\tau,X)+\sigma(\tau)Z\right)\delta+\sigma(\tau)\sqrt{\delta}w\\
    Y_\delta &= u^\theta(\tau+\delta,\hat{X}_\delta),
\end{align*}
% \end{minipage}
and that of FBSDE (\ref{eqn:fbsde_score_interpolant}) as
% \begin{minipage}[t]{0.5\textwidth}
\begin{align*}
       X' &= X_{\tau'}\quad Z' = \bar{\sigma}(\tau')\nabla v^{\theta'}(\tau',X')\\
       \hat{Y}'_{\delta} &= v^{\theta'}(\tau',X')+\frac{1}{2}\norm{Z'}^2\delta+\sqrt{\delta}Z'^Tw'\\
        \hat{X}'_{\delta} &= X'+ \left(\bar{\mu}(\tau',X')+\bar{\sigma}(\tau')Z'\right)\delta+\bar{\sigma}(\tau')\sqrt{\delta}w'\\
    Y'_{\delta} &= v^{\theta'}(\tau'+\delta,\hat{X}'_{\delta})\\
\end{align*}
% \end{minipage}
where  $w,w'\overset{\text{i.i.d.}}{\sim}\cN{0,I_d}$ and $X_t$ is the solution to the following ODE:
\begin{align*}
    X_0 &\sim\cN{0,r^2(0)I_d}\\
    \frac{dX_t}{dt} &= \begin{cases}
        \frac{\dot r(t)}{r(t)}X_t+\left(\dot g(t)-\frac{g(t)\dot r(t)}{r(t)}\right)\frac{r^2(t)}{\beta(t)g(t)}\nabla u^\theta\left(t,\frac{X_t}{\beta(t)}\right),\quad 0\le t\le T'\\
        \frac{\dot g(t)}{g(t)}X_t+\frac{r(t)}{\alpha(t)}\left(\frac{\dot g(t)}{g(t)}r(t)-\dot r(t)\right)\nabla v^{\theta'}(t,\frac{X_t}{\alpha(t)}),\quad T'<t\le T.
    \end{cases}
\end{align*}
We then form a loss function as follows:
\begin{multline}
\cL(\theta,\theta') = \expect{ \norm{\nabla u^\theta(T',X_{T'})-\nabla\varphi'(X_{T'})}^2}+\lambda\expect{\left(\hat{Y}_{\delta}-Y_{\delta}\right)^2}\\
+\expect{ \norm{\nabla v^{\theta'}(T,X_{T})-\nabla\varphi(X_{T})}^2}+\lambda\expect{\left(\hat{Y}'_{\delta}-Y'_{\delta}\right)^2}.%\norm{u^\theta(T',X_{T'})-\varphi(X_{T'})}^2 + \norm{v^{\theta'}(T,X_{T})-\varphi'(X_{T})}^2 +   
\end{multline}
We find a solution to FBSDEs (\ref{eqn:fbsde_score_interpolant},\ref{eqn:fbsde_velocity_interpolant}) and in turn to the PDEs (\ref{eqn:pde_velocity_interpolant}, \ref{eqn:score_pde}) by minimizing $\cL(\theta,\theta')$ over $\theta, \theta'$. We compile the steps involved in computing the loss in Procedure~\ref{proc:loss_int}.
\begin{procedure}
\DontPrintSemicolon
	\KwIn{$\theta,T,T',N,\delta,\lambda$}
	\KwOut{$\cL$}
 $X\sim\cN{0,r^2(0)I_d}$\;
$\cL \gets 0$\;
$N' = \left\lceil \frac{NT'}{T}\right\rceil$\;
$t_0 \gets 0, t_{N'} \gets T'$\;
$(t_1\le t_2\le\cdots\le t_{N'-1}) \gets U([0,T'])^{N'-1}$\;
\For{$i = 0,\cdots,N'-1$}{
    $\cL \gets \cL + \lambda*\text{LossFBSDE}(t_i,X,u^\theta,\delta,\mu,\sigma)$\;
    $X \gets X + \left(\frac{\dot r(t_i)}{r(t_i)}X+\left(\dot g(t_i)-\frac{g(t_i)\dot r(t_i)}{r(t_i)}\right)\frac{r^2(t_i)}{\beta(t_i)g(t_i)}\nabla u^\theta\left(t_i,\frac{X}{\beta(t_i)}\right)\right)(t_{i+1}-t_i)$\;
}
$\cL\gets \cL + \norm{\nabla u^\theta(T',X)-\nabla\varphi'(X)}^2$\;
$t_0 \gets T', t_{N-N'} \gets T$\;
$(t_1\le t_2\le\cdots\le t_{N-N'-1}) \gets U([T',T])^{N-N'-1}$\;
\For{$i = 0,\cdots,N-N'-1$}{
    $\cL \gets \cL + \lambda*\text{LossFBSDE}(t_i,X,v^{\theta'},\delta,\bar{\mu},\bar{\sigma})$\;
    $X \gets X + \left(\frac{\dot g(t_i)}{g(t_i)}X+\frac{r(t_i)}{\alpha(t_i)}\left(\frac{\dot g(t_i)}{g(t_i)}r(t_i)-\dot r(t_i)\right)\nabla v^{\theta'}(t_i,\frac{X}{\alpha(t_i)})\right)(t_{i+1}-t_i)$\;
}
$\cL\gets \cL + \norm{\nabla v^{\theta'}(T,X)-\nabla\varphi(X)}^2$\;
\Return{$\cL$}
\caption{LossFullInterpolant()}\label{proc:loss_int}
\end{procedure}

\subsubsection{Sampling}
Once we have an estimate for $\nabla u, \nabla v$, we can estimate the functions $b$ and $s$ by using equations (\ref{eqn:b_and_s}). We have
\begin{equation*}
    s(t,x) = \begin{cases}
        \frac{1}{\beta(t)}\nabla u(t,\frac{x}{\beta(t)})-\frac{x}{r^2(t)},\quad &0\le t<T'\\
        \frac{1}{\alpha(t)}\nabla v(t,\frac{x}{\alpha(t)}),\quad &T'\le t\le T',
    \end{cases}
\end{equation*}
\begin{equation*}
    b(t,x) = \begin{cases}
        \frac{\dot r(t)}{r(t)}x+\frac{r^2(t)}{\beta(t)g(t)}\left(\dot g(t)-\frac{g(t)\dot r(t)}{r(t)}\right)\nabla u(t,\frac{x}{\beta(t)}),\quad &0\le t<T'\\
        \frac{\dot g(t)}{g(t)}x+\frac{r(t)}{\alpha(t)}\left(r(t)\frac{\dot g(t)}{g(t)}-\dot r(t)\right)\nabla v(t,\frac{x}{\alpha(t)}),\quad &T'\le t\le T'.
    \end{cases}
\end{equation*} 
Once we have estimates of $b$ and $s$, we can sample from $\pi$ by simulating ODE (\ref{eqn:ODEforSampling}) or SDE (\ref{eqn:SDEforSampling}). In the Procedure~\ref{proc:smaple_int}, we present the steps involved in the sampling phase.

\begin{procedure}
\DontPrintSemicolon
	\KwIn{$\theta,\theta',T,T',N',\eps$}
	\KwOut{$S$}
$\Delta \gets \frac{T}{N}$\;
$S\sim\cN{0,r^2(0)I_d}$\;
\For{$i = 0,\cdots,N-1$}{
    
    $t = i*\Delta$\;
    $w\sim\cN{0,I_d}$\;
    \eIf{$t\le T'$}{
    $b \gets \frac{\dot r(t)}{r(t)}S+\frac{r^2(t)}{\beta(t)}\left(\frac{\dot g(t)}{g(t)}-\frac{\dot r(t)}{r(t)}\right)\nabla u^\theta\left(t,\frac{S}{\beta(t)}\right)$\;
    $s \gets \frac{1}{\beta(t)}\nabla u^\theta\left(t,\frac{S}{\beta(t)}\right) -\frac{S}{r^2(t)}$\;
    }
    {
    $b \gets \frac{\dot g(t)}{g(t)}S+\frac{r(t)}{\alpha(t)}\left(\frac{\dot g(t)}{g(t)}r(t)-\dot r(t)\right)\nabla v^{\theta'}(t,\frac{S}{\alpha(t)})$\;
    $s \gets \frac{1}{\alpha(t)}\nabla v^{\theta'}(t,\frac{S}{\alpha(t)})$\;
    }
    $S \gets S + \left(b+\frac{\eps^2(t)}{2}s\right)\Delta+\sqrt{\Delta}\eps(t)w$
}
	\caption{SampleFullInterpolant()}\label{proc:smaple_int}
\end{procedure}

\section{Optimal Control-based Approach}\label{sec:oc_based_approach}
We remark that $\nabla u$ could a priori be obtained as a solution to an optimization problem (see Lemma~\ref{lemma_app:oc_optimization}). In particular, let $u$ be the solution to PDE (\ref{eqn:velocity_pde}) and for an $m\in C^1([0,T]\times\R^d,\R^d)$ let the process $X_t$ be generated by the SDE $dX_t = \left(\sigma^2 m(t,X_t)+\mu(t,X_t)\right)dt+\sigma(t) dW_t$. Then, we have 
\begin{equation}\label{eqn:oc_optimization}
    \nabla u = \argmin{m} \expectCond{\frac{1}{2}\int_0^T\sigma^2(t)\norm{m(s,X_s)}^2ds-\varphi(X_T)}{X_0}.
\end{equation}
Typically, $m$ is parameterized by a neural network, and its parameters are learned by solving the minimization problem in (\ref{eqn:oc_optimization}). However, observe that the domain over which the function $\nabla u$ is learned depends on the distribution of the diffusion process $X_t$ for the optimal $m$. In our case, this presents issues since the diffusion process employed for sampling (\ref{eqn:SDEforSampling}) is different from $X_t$. Hence, if solved using the optimal control approach, $\nabla u$ would be learned over a domain that differs from what is required.

\begin{lemma}\label{lemma_app:oc_optimization}
    Let $u$ be the solution to PDE (\ref{eqn:velocity_pde}) and let the process $X_t$ be generated by the SDE
    \begin{equation}\label{eqn_app:oc_diffusion}
         dX_t = \left(\sigma^2 m(t,X_t)+\mu(t,X_t)\right)dt+\sigma(t) dW_t.
    \end{equation}
    Then, we have 
\begin{align*}
    u(0,X_0) &= - \min_m \expectCond{\frac{1}{2}\int_0^T\sigma^2(t)\norm{m(s,X_s)}^2ds-\varphi(X_T)}{X_0},\\
    \nabla u &= \argmin{m} \expectCond{\frac{1}{2}\int_0^T\sigma^2(t)\norm{m(s,X_s)}^2ds-\varphi(X_T)}{X_0}.
\end{align*}
\end{lemma}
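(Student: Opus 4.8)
The plan is to recognize the optimization problem in the statement as the stochastic optimal control problem whose value function is governed precisely by the HJB equation (\ref{eqn:velocity_pde}). I would introduce the value function
\[
V(t,x) = -\min_{m}\expectCond{\tfrac12\int_t^T\sigma^2(\tau)\norm{m(\tau,X_\tau)}^2\,d\tau-\varphi(X_T)}{X_t=x},
\]
where $X_\tau$ for $\tau\ge t$ solves (\ref{eqn_app:oc_diffusion}) started from $x$. The goal is to show $V=u$, the solution of (\ref{eqn:velocity_pde}), with terminal data $V(T,\cdot)=\varphi$ matching $u(T,\cdot)=\varphi$, and then identify the minimizer.

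First I would write the dynamic programming (HJB) equation for $V$. By Bellman's principle, $-V$ satisfies
\[
\partial_t(-V) + \min_{a\in\R^d}\Bigl\{\tfrac{\sigma^2}{2}\norm{a}^2 + (\sigma^2 a+\mu)^T\nabla(-V)\Bigr\} + \tfrac{\sigma^2}{2}\Delta(-V) = 0,
\]
with terminal condition $-V(T,x)=-\varphi(x)$. Carrying out the inner minimization over $a$: the integrand is $\tfrac{\sigma^2}{2}\norm{a}^2 - \sigma^2 a^T\nabla V - \mu^T\nabla V$, which is minimized at $a^\star = \nabla V$, giving minimal value $-\tfrac{\sigma^2}{2}\norm{\nabla V}^2-\mu^T\nabla V$. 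Substituting back and multiplying through by $-1$ yields exactly
\[
\partial_t V + \tfrac{\sigma^2}{2}\Delta V + \tfrac{\sigma^2}{2}\norm{\nabla V}^2 + \mu^T\nabla V = 0,
\]
which is (\ref{eqn:velocity_pde}) with terminal condition $V(T,\cdot)=\varphi$. By uniqueness of the solution to this backward PDE, $V=u$, which gives the first claimed identity at $t=0$, $x=X_0$. The optimal feedback control is $m^\star=\nabla V=\nabla u$, giving the second identity.

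To make this rigorous rather than merely formal, I would instead run a verification argument: assume $u$ solves (\ref{eqn:velocity_pde}) with $u(T,\cdot)=\varphi$, fix an arbitrary admissible $m\in C^1$, apply It\^o's formula to $u(t,X_t)$ along the trajectory of (\ref{eqn_app:oc_diffusion}), and use the PDE to cancel the $\partial_t u + \tfrac{\sigma^2}{2}\Delta u + \mu^T\nabla u$ terms. This produces
\[
d\,u(t,X_t) = \Bigl(-\tfrac{\sigma^2}{2}\norm{\nabla u}^2 + \sigma^2 m^T\nabla u\Bigr)dt + \sigma\,\nabla u^T dW_t.
\]
Integrating from $0$ to $T$, taking conditional expectation (the stochastic integral being a martingale under suitable integrability), and using $u(T,X_T)=\varphi(X_T)$ gives
\[
u(0,X_0) = \expectCond{\varphi(X_T) - \int_0^T\Bigl(-\tfrac{\sigma^2}{2}\norm{\nabla u}^2 + \sigma^2 m^T\nabla u\Bigr)dt}{X_0}.
\]
Completing the square, $-\tfrac{\sigma^2}{2}\norm{\nabla u}^2 + \sigma^2 m^T\nabla u = \tfrac{\sigma^2}{2}\norm{m}^2 - \tfrac{\sigma^2}{2}\norm{m-\nabla u}^2$, so
\[
u(0,X_0) = \expectCond{\varphi(X_T) - \tfrac12\int_0^T\sigma^2\norm{m}^2 dt}{X_0} + \tfrac12\,\expectCond{\int_0^T\sigma^2\norm{m-\nabla u}^2 dt}{X_0}.
\]
The last term is nonnegative and vanishes iff $m=\nabla u$ along the paths, which simultaneously establishes $u(0,X_0)=-\min_m(\cdots)$ and that the $\argmin$ is $\nabla u$.

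The main obstacle I anticipate is not the algebra (the completion-of-square step is routine) but the technical integrability and regularity conditions needed to justify the It\^o expansion and the martingale property of the stochastic integral, and to ensure the class of admissible $m$ over which we minimize is rich enough that $\nabla u$ is itself admissible. Since the paper's terminal condition $\varphi$ can grow (it involves $\log\pi$ against a Gaussian reference), one must either impose growth assumptions on $\pi$ and $\beta$ so that $u$ and $\nabla u$ have controlled growth, or localize with stopping times and pass to the limit. I would handle this exactly as in the standard verification theorems for stochastic control (e.g.\ via a localizing sequence $\tau_n\uparrow T$ and dominated/monotone convergence), treating these as the "certain regularity conditions" already invoked for the FBSDE correspondence earlier in the paper.
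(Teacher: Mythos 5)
Your verification argument---applying It\^o's formula to $u(t,X_t)$ along the controlled trajectory, using the HJB equation to cancel the linear terms, and completing the square so that the residual $\tfrac{\sigma^2}{2}\norm{m-\nabla u}^2$ term is nonnegative and vanishes exactly at $m=\nabla u$---is precisely the proof given in the paper. The dynamic-programming sketch and the discussion of integrability/localization are extra framing the paper omits (it simply notes the result is a special case of standard stochastic control theory), but the core argument is the same.
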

\begin{proof}
This result is a special example of the general theory of stochastic optimal control, but we present a short proof for the convenience of the reader.
    Applying Ito's lemma to $u(t,X_t)$ gives
\begin{align*}
    du(t,X_t) &= \partial_t u(t,X_t)dt+\nabla u(t,X_t)^TdX_t+\frac{\sigma^2}{2}\Delta u(t,X_t)dt,\\
    & = \left(\partial_t u(t,X_t)+\sigma^2 m^T\nabla u(t,X_t)+\mu^T\nabla u(t,X_t)+\frac{\sigma^2}{2}\Delta u(t,X_t)\right)dt+\sigma\nabla u^TdW_t,\\
    &= \left(\sigma^2 m^T\nabla u(t,X_t)-\frac{\sigma^2}{2}\norm{\nabla u(t,X_t)}^2\right)dt+\sigma\nabla u^TdW_t,\\
    &= -\frac{\sigma^2}{2}\left(\norm{\nabla u(t,X_t)-m}^2 -\norm{m}^2\right)dt+\sigma\nabla u^TdW_t.
\end{align*}
Integrating from $t$ to $T$ and taking expectation conditioned on $X_t$ gives
\begin{align*}
    u(t,X_t) &= \expectCond{\frac{\sigma^2}{2}\int_t^T\norm{\nabla u(s,X_s)-m(s,X_s)}^2ds}{X_t}\\
    &\qquad\qquad\qquad\qquad+ \expectCond{u(T,X_T) - \frac{\sigma^2}{2}\int_t^T\norm{m(s,X_s)}^2ds}{X_t},\\
    &\ge \expectCond{u(T,X_T) - \frac{\sigma^2}{2}\int_t^T\norm{m(s,X_s)}^2ds}{X_t}.\\
\end{align*}
and the equality is attained when $m(t,X_t)=\nabla u(t,X_t)$ a.s.
Thus, we get the following variational formulation:
\begin{align*}
    u(t,X_t) &= \max_m \expectCond{\varphi(X_T) - \frac{\sigma^2}{2}\int_t^T\norm{m(s,X_s)}^2ds}{X_t},\\
    &= - \min_m \expectCond{\frac{\sigma^2}{2}\int_t^T\norm{m(s,X_s)}^2ds-\varphi(X_T)}{X_t},\\
    \nabla u(t,X_t) &= \argmin{m} \expectCond{\frac{\sigma^2}{2}\int_t^T\norm{m(s,X_s)}^2ds-\varphi(X_T)}{X_t}.
\end{align*}
\end{proof}

\section{Interpolants}\label{appndx:interpolants}
Figure \ref{fig:linearInterpolants} shows some examples of linear interpolants that we used in our experiments. The first row has half interpolants and the second row contains full interpolants. We observed that keeping $\frac{\dot r(t)}{r(t)}$ close to $0$ for small values of $t$ is better for training. This is because, at the beginning of training, $\frac{\dot r(t)}{r(t)}X_t$ is the drift in (\ref{eqn:fbsde_init_process}). Having this negative will decrease the initial exploration of the space, while having it positive might cause stability issues. 
\begin{figure}
  \centering
  \includegraphics[scale = 0.6]{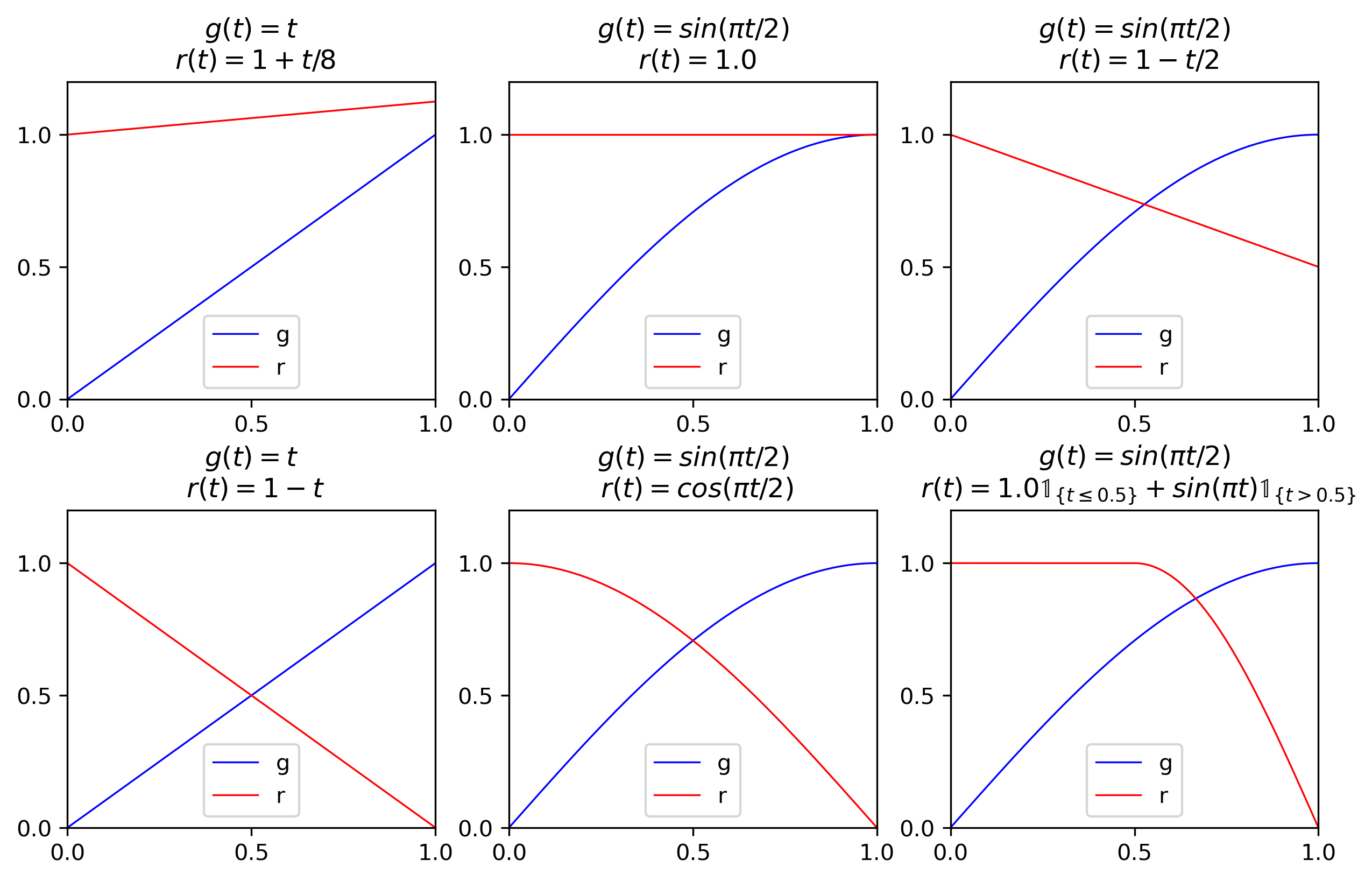}
  \caption{Examples of linear interpolants. Top row: half interpolants. Bottom row: full interpolants.}
  \label{fig:linearInterpolants}
\end{figure}

\section{Proofs of Lemmas}\label{sec:Proofs}
\begin{lemma}\label{lemma_app:pde_density}
The probability density function of $x_t$ defined in (\ref{eqn:half_interpolant}) satisfies a PDE given by
\begin{equation}\label{eqn_app:first_order_pde}
    \partial_t\rho+\nabla\cdot(b\rho) = 0,\quad \rho(0,\cdot) \equiv \cN{0,r^2(0)I_d},\quad \rho(T,\cdot) = \pi(\cdot),
\end{equation}
where $b(t,x) =  \dot g(t)\expectCond{x^*}{x_t=x}-r(t)\dot r(t)s(t,x),$. Equivalently, $\rho$ satisfies the following Fokker-Planck equation:
\begin{equation}\label{eqn_app:FPE}
    \partial_t\rho-\frac{\eps^2(t)}{2}\Delta\rho+\nabla\cdot\left(\left(b+\frac{\eps^2(t)}{2}s(t,x)\right)\rho\right) = 0,\quad \rho(0,\cdot) \equiv \cN{0,r^2(0)I_d}.
\end{equation}
\end{lemma}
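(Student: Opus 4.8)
The plan is to compute the time derivative of $\rho$ directly from its integral representation and match it to the divergence form stated in the lemma. Recall that $x_t = g(t)x^* + r(t)z$ with $x^*\sim\nu$, $z\sim\cN{0,I_d}$ independent, so that
\begin{equation*}
\rho(t,x) = \int_{\R^d}\nu(dx^*)\,\frac{1}{(2\pi r^2(t))^{d/2}}\exp\!\left(-\frac{\|x-g(t)x^*\|^2}{2r^2(t)}\right).
\end{equation*}
First I would differentiate the Gaussian kernel $k(t,x,x^*)$ with respect to $t$, producing terms proportional to $\dot g$ and $\dot r$. The key algebraic observation is that $\partial_t k$ can be rewritten using the spatial operators acting on $k$: one has $\nabla_x k = -\frac{x-g x^*}{r^2}k$ and $\Delta_x k = \left(\frac{\|x-gx^*\|^2}{r^4} - \frac{d}{r^2}\right)k$, and similarly $(x-gx^*)$ can be traded against $\nabla_x k$. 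After integrating against $\nu(dx^*)$, the terms involving $x^*$ inside the integral become conditional expectations $\expectCond{x^*}{x_t=x}$ by definition (since $\rho(t,x)\,\expectCond{x^*}{x_t=x} = \int x^* k(t,x,x^*)\nu(dx^*)$), and the score appears via $s(t,x) = \nabla\log\rho = \frac{1}{\rho}\int \nabla_x k\,\nu(dx^*) = \frac{g\,\expectCond{x^*}{x_t=x}-x}{r^2}$, which is exactly the third identity of Lemma~\ref{lemma:b_and_s}. Assembling these pieces should yield $\partial_t\rho = -\nabla\cdot(b\rho)$ with $b = \dot g\,\expectCond{x^*}{x_t=x} - r\dot r\, s$, establishing \eqref{eqn_app:first_order_pde}.

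For the equivalent Fokker-Planck form \eqref{eqn_app:FPE}, I would simply note the identity $\frac{\eps^2}{2}\Delta\rho = \frac{\eps^2}{2}\nabla\cdot(\nabla\rho) = \frac{\eps^2}{2}\nabla\cdot(s\rho)$, since $\nabla\rho = \rho\nabla\log\rho = \rho\, s$. Adding and subtracting $\frac{\eps^2}{2}\Delta\rho$ to the continuity equation converts it into \eqref{eqn_app:FPE} for arbitrary $\eps$; this step is purely formal and requires no new computation. The initial condition $\rho(0,\cdot)\equiv\cN{0,r^2(0)I_d}$ follows from $g(0)=0$, which forces $x_0 = r(0)z\sim\cN{0,r^2(0)I_d}$ independently of $\nu$, and the terminal condition $\rho(T,\cdot)=\pi$ is the constraint we impose by choosing $\nu$ appropriately (it is not derived, but stated as the design requirement).

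The main obstacle I anticipate is organizing the $t$-differentiation of the kernel cleanly: the exponent $-\|x-g x^*\|^2/(2r^2)$ depends on $t$ through both $g$ and $r$, and the prefactor $r^{-d}$ contributes an extra $-d\dot r/r$ term that must cancel correctly against the $-d/r^2$ piece of $\Delta_x k$. Keeping track of which combinations reassemble into $\nabla\cdot(b\rho)$ versus spurious leftover terms is the delicate bookkeeping; I would handle it by expressing everything in terms of the two spatial quantities $\expectCond{x^*}{x_t=x}$ and $s(t,x)$ as early as possible and then verifying the divergence structure by expanding $\nabla\cdot(b\rho) = (\nabla\cdot b)\rho + b^T\nabla\rho$ and comparing coefficients. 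An alternative, possibly cleaner route is to invoke that $x_t$ solves an SDE/ODE of the form $dx_t = \big(\frac{\dot g}{g}x_t + (\text{score term})\big)dt$ — but the conditional-expectation drift $b$ arises most transparently from the interpolant structure, so I would follow \cite{albergo_stochastic_2023} and argue via a test function: for smooth compactly supported $\phi$, compute $\frac{d}{dt}\E[\phi(x_t)] = \E[\nabla\phi(x_t)^T\dot x_t]$ where $\dot x_t = \dot g x^* + \dot r z$, then rewrite $\dot r z = \frac{\dot r}{r}(x_t - g x^*)$ and take conditional expectations to identify the drift as $b$, which gives the weak form of \eqref{eqn_app:first_order_pde} immediately.
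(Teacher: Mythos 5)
Your proposal is correct. Your primary route---differentiating the Gaussian kernel $k(t,x,x^*)$ under the integral---is genuinely different from the paper's proof, which instead differentiates the characteristic function $\hat\rho(t,k)=\E[e^{i\langle k,x_t\rangle}]$, inserts the tower property to replace $\dot x_t$ by $\expectCond{\dot x_t}{x_t}$, and inverts the Fourier transform to read off $\partial_t\rho=-\nabla\cdot\bigl(\expectCond{\dot x_t}{x_t=x}\rho\bigr)$, finishing with $\expectCond{z}{x_t=x}=-r(t)s(t,x)$. Your kernel computation does close: one checks $\partial_t k = -\dot g\,\nabla\cdot(x^*k)+r\dot r\,\Delta_x k$, so the $-d\dot r/r$ prefactor term cancels against the $-d/r^2$ piece of $\Delta_x k$ exactly as you anticipated, and integrating against $\nu$ gives $\partial_t\rho=-\nabla\cdot(b\rho)$ with no leftover terms. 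The trade-off is that the kernel route is a self-contained but heavier calculation, whereas the paper's (and your closing test-function alternative, which is the same argument with $\phi(x)=e^{i\langle k,x\rangle}$) isolates the only probabilistic input---the identification of the drift as $\expectCond{\dot x_t}{x_t=x}$---and generalizes immediately to nonlinear interpolants. Your handling of the remaining points matches the paper: the Fokker--Planck form follows from $\Delta\rho=\nabla\cdot(s\rho)$, the initial condition from $g(0)=0$, and $\rho(T,\cdot)=\pi$ is a design constraint on $\nu$ rather than a consequence.
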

\begin{proof}
    Let $\hat{\rho}(t,\cdot)$ be the characteristic funciton of $\rho(t,\cdot)$. That is,
    \begin{align*}
        \hat{\rho}(t,k) &= \int_{\R^d}\rho(t,x)e^{i\langle k,x\rangle}dx,\\
        &= \expect{e^{i\langle k,x_t\rangle}}.
    \end{align*}
    Taking derivative of $\hat{\rho}$ w. r. t. $t$, we get
    \begin{align*}
        \partial_t\hat{\rho} &= \left\langle ik,\expect{\dot x_t e^{i\langle k,x_t\rangle}}\right\rangle,\\
        &= \left\langle ik,\expect{\expectCond{\dot x_t}{x_t} e^{i\langle k,x_t\rangle}}\right\rangle.
    \end{align*}
    Taking the Fourier transform of above equation, we get
    \begin{align*}
        \partial_t\rho = \nabla\cdot(\expectCond{\dot x_t}{x_t=x}\rho).
    \end{align*}
    It remains to show that $b(t,x)=\expectCond{\dot x_t}{x_t}$. We have
    \begin{align*}
        \expectCond{\dot x_t}{x_t} &= \dot g(t)\expectCond{x^*}{x_t=x} + \dot r(t)\expectCond{z}{x_t=x},\\
        & = \dot g(t)\expectCond{x^*}{x_t=x}-r(t)\dot r(t)s(t,x),
    \end{align*}
    where we used the result that $\expectCond{z}{x_t=x} = -r(t)s(t,x)$.
    Fokker-Planck equation (\ref{eqn_app:FPE}) can be readily obtained by noting that $\nabla\cdot(s\rho) = \Delta\rho$.
\end{proof}

\begin{lemma}\label{lemma_app:b_and_s}
    Let $x_t$ be a linear stochastic interpolant. Let $s(t,x) = \nabla \rho(t,x)$ and $b(t,x) = \dot g(t)\expectCond{x^*}{x_t=x}-r(t)\dot r(t)s(t,x)$. Then, we have
\begin{equation}\label{eqn_app:b_and_s}
\begin{split}
  b(t,x) &= \begin{cases}
      \frac{\dot r(t)}{r(t)}x+\left(\dot g(t)-\frac{g(t)\dot r(t)}{r(t)}\right)\expectCond{x^*}{x_t=x},\\
      \frac{\dot g(t)}{g(t)}x+\left(r^2(t)\frac{\dot g(t)}{g(t)}-\dot r(t)r(t)\right)s(t,x),
      \end{cases}\\
s(t,x) &= \frac{g(t)\expectCond{x^*}{x_t=x}-x}{r^2(t)}.  
\end{split}
\end{equation}
\end{lemma}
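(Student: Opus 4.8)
The plan is to start from the two defining relations, namely $b(t,x) = \dot g(t)\,\expectCond{x^*}{x_t=x} - r(t)\dot r(t)\, s(t,x)$ and the conditional-expectation identity for the Gaussian noise. The cleanest route is to first establish the third displayed equation, the formula for $s$, and then use it to eliminate either $\expectCond{x^*}{x_t=x}$ or $s$ from the definition of $b$ to obtain the two cases. So the first step is to compute $s(t,x) = \nabla\log\rho(t,x)$ directly from the representation $\rho(t,x) = \frac{1}{(2\pi r^2(t))^{d/2}}\int_{\R^d}\nu(dx^*)\,e^{-\norm{x-g(t)x^*}^2/(2r^2(t))}$. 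Differentiating under the integral sign, $\nabla\rho(t,x) = \int \nu(dx^*)\,\frac{g(t)x^* - x}{r^2(t)}\,\frac{e^{-\norm{x-g(t)x^*}^2/(2r^2(t))}}{(2\pi r^2(t))^{d/2}}$, and dividing by $\rho(t,x)$ turns the integral into a conditional expectation against the posterior density of $x^*$ given $x_t = x$. This yields $s(t,x) = \frac{g(t)\expectCond{x^*}{x_t=x} - x}{r^2(t)}$, which is exactly the claimed third identity. (Alternatively one can invoke the fact already used in the proof of Lemma~\ref{lemma_app:pde_density} that $\expectCond{z}{x_t=x} = -r(t)s(t,x)$ together with $x_t = g(t)x^* + r(t)z$, i.e. $x = g(t)\expectCond{x^*}{x_t=x} + r(t)\expectCond{z}{x_t=x} = g(t)\expectCond{x^*}{x_t=x} - r^2(t)s(t,x)$, and solve for $s$; this is the shorter argument.)

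The second step is purely algebraic. Rearranging the score identity gives two substitutions: $\expectCond{x^*}{x_t=x} = \frac{x + r^2(t)s(t,x)}{g(t)}$ and, equivalently, $r(t)\dot r(t)\,s(t,x) = \frac{\dot r(t)}{r(t)}\big(g(t)\expectCond{x^*}{x_t=x} - x\big)$. For the first case, substitute the latter into $b = \dot g(t)\expectCond{x^*}{x_t=x} - r(t)\dot r(t)s(t,x)$ to get $b = \dot g(t)\expectCond{x^*}{x_t=x} - \frac{\dot r(t)}{r(t)}\big(g(t)\expectCond{x^*}{x_t=x} - x\big) = \frac{\dot r(t)}{r(t)}x + \big(\dot g(t) - \frac{g(t)\dot r(t)}{r(t)}\big)\expectCond{x^*}{x_t=x}$. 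For the second case, substitute $\expectCond{x^*}{x_t=x} = \frac{x + r^2(t)s(t,x)}{g(t)}$ into the definition of $b$: $b = \frac{\dot g(t)}{g(t)}\big(x + r^2(t)s(t,x)\big) - r(t)\dot r(t)s(t,x) = \frac{\dot g(t)}{g(t)}x + \big(r^2(t)\frac{\dot g(t)}{g(t)} - \dot r(t)r(t)\big)s(t,x)$, which is the second displayed line. Both cases thus follow from a single line of rearrangement each.

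I do not anticipate a genuine obstacle here; the lemma is essentially a bookkeeping exercise once the score formula is in hand. The only point requiring a modicum of care is the differentiation under the integral sign and the interpretation of the resulting ratio of integrals as a conditional expectation with respect to the correct posterior — this is standard (it is the Tweedie/Bayes formula for a Gaussian observation model) and is already implicitly used in Lemma~\ref{lemma_app:pde_density}, so I would either cite that or spell it out in one line. A secondary bookkeeping point is that for a half interpolant one has $g(0)=0$, so the substitution producing the second case (which divides by $g(t)$) is valid only for $t>0$; since the lemma statement is about the functional identities on the region where the quantities are defined, this causes no problem, but I would flag it.
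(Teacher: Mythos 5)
Your proposal is correct and follows essentially the same route as the paper: the paper also takes the conditional expectation of $x = g(t)x^* + r(t)z$ given $x_t = x$, invokes Tweedie's formula $\expectCond{z}{x_t=x} = -r(t)s(t,x)$ to get the score identity, and then rearranges to obtain the two expressions for $b$ (your primary derivation by differentiating under the integral is just an explicit proof of Tweedie's formula, which the paper leaves as ``a direct computation''). Your remark about the division by $g(t)$ failing at $t=0$ for a half interpolant is a sensible caveat that the paper itself addresses only in the surrounding discussion, not in the proof.
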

\begin{proof}
    We have $x_t=g(t)x^*+r(t)z$. Taking expectation conditioned on $x_t=x$, we get
    \begin{equation*}
        x = g(t)\expectCond{x^*}{x_t=x}+r(t)\expectCond{z}{x_t=x}.
    \end{equation*}
    A direct computation (also known as the Tweedie's formula) yields that $\expectCond{z}{x_t=x} = -r(t)s(t,x)$. Hence we can derive $s(t,x)$ from $\expectCond{x^*}{x_t=x}$ and vice-versa. This proves the claims in the Lemma.
    
\end{proof}

\begin{lemma}\label{lemma_app_app:velociyt_HJB_PDE}
    Let $u:[0,T]\times\R^d:\rightarrow\R$ be the function given in (\ref{eqn:velocity}). Then $u$ satisfies the following Hamilton-Jacobi-Bellman equation
\begin{equation}\label{eqn_app:velocity_pde}
    \partial_t u + \frac{\sigma^2}{2}\Delta u + \frac{\sigma^2}{2}\norm{\nabla u}^2+\mu^T\nabla u = 0,
\end{equation}
where $\sigma^2(t) = 2\frac{r^2(t)}{\beta^2(t)}\left(\frac{\dot g(t)}{g(t)}-\frac{\dot r(t)}{r(t)}\right)$ and $\mu(t)= - \partial_t\log\left(\frac{\beta(t)g(t)}{r^2(t)}\right)x$.
\end{lemma}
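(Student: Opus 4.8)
The plan is to verify the PDE by direct differentiation of the closed-form expression
\[
u(t,x) = \log \int_{\R^d} \nu(dx^*)\, e^{a(t)\langle x,x^*\rangle - c(t)\|x^*\|^2},
\]
where $a(t) := \beta(t)g(t)/r^2(t)$ and $c(t) := g^2(t)/(2r^2(t))$. The key point is that all $x$-dependence enters through the linear term $a(t)\langle x,x^*\rangle$, so spatial derivatives are easy: if I write $P_{t,x}$ for the tilted (exponentially reweighted) probability measure on $x^*$ with normalizing constant $e^{u(t,x)}$, then $\nabla u(t,x) = a(t)\,\mathbb{E}_{P_{t,x}}[x^*]$ and the Hessian is $a^2(t)\,\mathrm{Cov}_{P_{t,x}}(x^*)$. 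Consequently $\Delta u + \|\nabla u\|^2 = a^2(t)\,\mathbb{E}_{P_{t,x}}[\|x^*\|^2]$, i.e. $a^2(t)$ times the tilted second moment.

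Next I would compute $\partial_t u$. Differentiating $u = \log\mathbb{E}_\nu[e^{a(t)\langle x,x^*\rangle - c(t)\|x^*\|^2}]$ in $t$ brings down $\dot a(t)\langle x,x^*\rangle - \dot c(t)\|x^*\|^2$ inside the tilted expectation:
\[
\partial_t u = \dot a(t)\,\langle x, \mathbb{E}_{P_{t,x}}[x^*]\rangle - \dot c(t)\,\mathbb{E}_{P_{t,x}}[\|x^*\|^2].
\]
Now I substitute the spatial identities: $\mathbb{E}_{P_{t,x}}[x^*] = \nabla u / a(t)$ and $\mathbb{E}_{P_{t,x}}[\|x^*\|^2] = (\Delta u + \|\nabla u\|^2)/a^2(t)$. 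This gives
\[
\partial_t u = \frac{\dot a(t)}{a(t)}\,x^T\nabla u - \frac{\dot c(t)}{a^2(t)}\bigl(\Delta u + \|\nabla u\|^2\bigr).
\]
It then remains to check that the coefficients match: I must show $-\dot c(t)/a^2(t) = -\sigma^2(t)/2$ and $\dot a(t)/a(t) = -\mu$-coefficient, i.e. $\dot a/a = -\partial_t\log(\beta g/r^2) = -\partial_t\log a$. Wait — $\dot a/a = \partial_t\log a$, not $-\partial_t\log a$; so I should double-check the sign convention in the statement, where $\mu(t) = -\partial_t\log(\beta g/r^2)\,x$ and the PDE has $+\mu^T\nabla u$, giving the term $-\partial_t\log(a)\, x^T\nabla u$ on the left-hand side. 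Moving $\partial_t u$ to the left with a plus sign and the rest with their signs, the bookkeeping should close; this is the one place to be careful.

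For the $\sigma^2$ identification I compute $\dot c/a^2$. With $c = g^2/(2r^2)$ one has $\dot c = \frac{g\dot g}{r^2} - \frac{g^2\dot r}{r^3} = \frac{g^2}{r^2}\bigl(\frac{\dot g}{g} - \frac{\dot r}{r}\bigr)$, and $a^2 = \beta^2 g^2/r^4$, so $\dot c/a^2 = \frac{r^2}{\beta^2}\bigl(\frac{\dot g}{g}-\frac{\dot r}{r}\bigr) = \sigma^2/2$, exactly as claimed; this also shows $\sigma^2\ge 0$ under the hypothesis that $g/r$ is non-decreasing. Collecting terms yields $\partial_t u + \frac{\sigma^2}{2}\Delta u + \frac{\sigma^2}{2}\|\nabla u\|^2 + \mu^T\nabla u = 0$. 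The main (only) obstacle is the sign/coefficient bookkeeping between $\partial_t u$ and the drift term, plus a mild regularity remark allowing differentiation under the integral sign (finiteness of the relevant tilted moments of $\nu$), which I would state as a standing assumption rather than belabor.
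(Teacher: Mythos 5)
Your proposal is correct and follows essentially the same route as the paper: direct differentiation of the log-integral, recognizing $\nabla u$ and $\Delta u+\|\nabla u\|^2$ as $a(t)$ and $a^2(t)$ times the first and second moments of the tilted measure, and matching these against $\partial_t u$. Your sign worry resolves as you suspected ($\mu^T\nabla u=-\tfrac{\dot a}{a}x^T\nabla u$ exactly cancels the $+\tfrac{\dot a}{a}x^T\nabla u$ arising from $\partial_t u$), and your explicit verification that $\dot c/a^2=\sigma^2/2$ actually completes a coefficient-matching step that the paper's own proof leaves implicit.
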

\begin{proof}
    From the definition of $u$ in (\ref{eqn:velocity}), we can directly compute $\partial_tu, \nabla u$ and $\Delta u$, which gives
    \begin{align*}
        \partial_t u &= \frac{\int_{\R^d}\nu(dx^*)\left(\partial_t\frac{\beta(t)g(t)}{r^2(t)}<x,x^*>-\partial_t\frac{g^2(t)}{2r^2(t)}\norm{x^*}^2\right)e^{\frac{\beta(t)g(t)}{r^2(t)}<x,x^*>-\frac{g^2(t)}{2r^2(t)}\norm{x^*}^2}}{\int_{\R^d}\nu(dx^*)e^{\frac{\beta(t)g(t)}{r^2(t)}<x,x^*>-\frac{g^2(t)}{2r^2(t)}\norm{x^*}^2}}\\
        \nabla u &= \frac{\int_{\R^d}\nu(dx^*)\left(\frac{\beta(t)g(t)}{r^2(t)}\right)x^*e^{\frac{\beta(t)g(t)}{r^2(t)}<x,x^*>-\frac{g^2(t)}{2r^2(t)}\norm{x^*}^2}}{\int_{\R^d}\nu(dx^*)e^{\frac{\beta(t)g(t)}{r^2(t)}<x,x^*>-\frac{g^2(t)}{2r^2(t)}\norm{x^*}^2}}\\
        \nabla^2 u &= \frac{\int_{\R^d}\nu(dx^*)\left(\frac{\beta(t)g(t)}{r^2(t)}\right)^2x^*{x^*}^Te^{\frac{\beta(t)g(t)}{r^2(t)}<x,x^*>-\frac{g^2(t)}{2r^2(t)}\norm{x^*}^2}}{\int_{\R^d}\nu(dx^*)e^{\frac{\beta(t)g(t)}{r^2(t)}<x,x^*>-\frac{g^2(t)}{2r^2(t)}\norm{x^*}^2}}-\nabla u\nabla u^T\\
        \Delta u &= \frac{\int_{\R^d}\nu(dx^*)\left(\frac{\beta(t)g(t)}{r^2(t)}\right)^2\norm{x^*}^2e^{\frac{\beta(t)g(t)}{r^2(t)}<x,x^*>-\frac{g^2(t)}{2r^2(t)}\norm{x^*}^2}}{\int_{\R^d}\nu(dx^*)e^{\frac{\beta(t)g(t)}{r^2(t)}<x,x^*>-\frac{g^2(t)}{2r^2(t)}\norm{x^*}^2}}-\norm{\nabla u}^2.
    \end{align*}
This shows that $u$ satisfies the PDE (\ref{eqn_app:velocity_pde}).
\end{proof}

\begin{lemma}\label{lemma_app:pde_fbsde}
    Let $u:[0,T]\times\R^d:\rightarrow\R$ be the solution to PDE (\ref{eqn:velocity_pde}). Then the processes $Y_t$ and $Z_t$ in (\ref{eqn:fbsde_half_interpolant}) is given by $Y_t=u(t,X_t)$ and $Z_t=\sigma(t)\nabla u(t,X_t)$.
\end{lemma}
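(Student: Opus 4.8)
<br>

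The plan is to apply It\^o's lemma to the process $u(t,X_t)$, where $X_t$ is the forward component of the FBSDE (\ref{eqn:fbsde_half_interpolant}), and then match the resulting Doob--Meyer decomposition against the prescribed dynamics of $Y_t$. Concretely, first I would compute
\[
du(t,X_t) = \Bigl(\partial_t u + (\mu + \sigma Z_t)^T\nabla u + \tfrac{\sigma^2}{2}\Delta u\Bigr)dt + \sigma\,\nabla u^T dW_t,
\]
where all derivatives of $u$ are evaluated at $(t,X_t)$. The key observation is that the drift term $\mu^T\nabla u + \tfrac{\sigma^2}{2}\Delta u + \partial_t u$ can be replaced using the PDE (\ref{eqn:velocity_pde}): since $u$ solves $\partial_t u + \tfrac{\sigma^2}{2}\Delta u + \tfrac{\sigma^2}{2}\norm{\nabla u}^2 + \mu^T\nabla u = 0$, that combination equals $-\tfrac{\sigma^2}{2}\norm{\nabla u}^2$.

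Next, I would set $\tilde Y_t := u(t,X_t)$ and $\tilde Z_t := \sigma(t)\nabla u(t,X_t)$, so that the It\^o expansion becomes
\[
d\tilde Y_t = \Bigl(\sigma\,\nabla u^T Z_t - \tfrac{\sigma^2}{2}\norm{\nabla u}^2\Bigr)dt + \tilde Z_t^T dW_t = \Bigl(\tilde Z_t^T Z_t - \tfrac12\norm{\tilde Z_t}^2\Bigr)dt + \tilde Z_t^T dW_t.
\]
At this point, if we substitute the guess $Z_t = \tilde Z_t$, the drift collapses to $\tfrac12\norm{\tilde Z_t}^2$, which matches the $dY_t$ equation in (\ref{eqn:fbsde_half_interpolant}). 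Finally, the terminal condition is handled by the identification $u(T,\cdot) = \varphi$ from (\ref{eqn:terminal_cond_vel_pde}), so $\tilde Y_T = u(T,X_T) = \varphi(X_T)$, matching $Y_T = \varphi(X_T)$. Hence $(X_t,\tilde Y_t,\tilde Z_t)$ solves the FBSDE system, and by the uniqueness of solutions to (\ref{eqn:fbsde_half_interpolant}) (guaranteed by the Pardoux--Peng theory invoked in the Preliminaries) we conclude $Y_t = u(t,X_t)$ and $Z_t = \sigma(t)\nabla u(t,X_t)$.

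The main obstacle, I expect, is not the computation itself but justifying the invocation of the uniqueness theorem: the coefficients here are of quadratic growth in $Z$ (the driver $f = \tfrac12\norm{Z}^2$ and the forward drift depends linearly on $Z$), which does not fit the classical globally-Lipschitz Pardoux--Peng hypotheses, so strictly one should either appeal to the quadratic-growth BSDE literature or to regularity/growth assumptions on $u$ and $\nabla u$ that make the verification argument self-contained. A cleaner route that sidesteps this is to present the argument purely as a \emph{verification} result: exhibit $(X_t, u(t,X_t), \sigma\nabla u(t,X_t))$ as \emph{a} solution via the It\^o computation above, which is all that is actually needed for the algorithm, and remark that uniqueness holds under the standing regularity assumptions. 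I would also need to be mildly careful that $X_t$ is well-defined (the SDE for $X_t$ has drift $\mu + \sigma\nabla u$, which is where the design parameter $\beta$ keeping coefficients bounded on $[0,T]$ becomes relevant) and that the stochastic integral $\int_0^t \sigma\nabla u^T dW_s$ is a genuine martingale so that taking expectations is legitimate — both of which follow from the earlier standing assumptions on the interpolant functions and on $u$.
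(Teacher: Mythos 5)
Your proof is correct and is essentially the paper's argument made explicit: the paper simply rewrites the PDE as $\partial_t u + \frac{\sigma^2}{2}\Delta u - \frac{\sigma^2}{2}\Vert\nabla u\Vert^2 + (\mu+\sigma\cdot\sigma\nabla u)^T\nabla u = 0$ and cites the general PDE--FBSDE correspondence from the Preliminaries, which is exactly the It\^o verification you carry out by hand. Your closing caveat --- that the quadratic driver $f=\tfrac12\Vert Z\Vert^2$ and the $Z$-dependent forward drift fall outside the classical Lipschitz Pardoux--Peng hypotheses, so the lemma is cleanest read as a verification result with uniqueness deferred to standing regularity assumptions --- is a legitimate point the paper glosses over.
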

\begin{proof}
    We can rewrite the PDE (\ref{eqn:velocity_pde}) as 
    \begin{equation*}
        \partial_t u + \frac{\sigma^2}{2}\Delta u - \frac{\sigma^2}{2}\norm{\nabla u}^2 + (\mu+\sigma\nabla u)^T\nabla u = 0.
    \end{equation*}
    Thus, appealing to the connection between PDEs and FBSDEs established in the Preliminaries, we identify that $Y_t = u(t,X_t)$ and $Z_t=\sigma(t)\nabla u(t,X_t)$.
\end{proof}

\begin{lemma}\label{lemma_app:score_pde}
    Let $v(t,x) = \log\rho(t,\alpha(t)x)+d\log g(t)$. Then $v$ satisfies the following Hamilton-Jacobi-Bellman equation
\begin{equation}\label{eqn_app:score_pde}
    \partial_t v+\frac{\bar{\sigma}^2}{2}\Delta v+\frac{\bar{\sigma}^2}{2}\norm{\nabla v}^2+\partial_t\log{\frac{g(t)}{\alpha(t)}}x^T\nabla v=0,
\end{equation}
where $\bar{\sigma}^2 = 2\frac{r^2(t)}{\alpha^2(t)}\left(\frac{\dot g(t)}{g(t)}-\frac{\dot r(t)}{r(t)}\right)$.
\end{lemma}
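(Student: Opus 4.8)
The plan is to mirror the proof of Lemma~\ref{lemma_app_app:velociyt_HJB_PDE}, since the quantity $v(t,x) = \log\rho(t,\alpha(t)x)+d\log g(t)$ is structurally analogous to $u$. First I would write out an explicit integral representation for $\rho(t,\alpha(t)x)$. Recall from the expression for $\rho$ derived in the text, $\rho(t,y) = \psi(t,y)\int_{\R^d}\nu(dx^*)e^{\frac{g(t)}{r^2(t)}\langle y,x^*\rangle-\frac{g^2(t)}{2r^2(t)}\norm{x^*}^2}$, where $\psi(t,y)=\frac{1}{(2\pi r^2(t))^{d/2}}e^{-\norm{y}^2/(2r^2(t))}$. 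Substituting $y=\alpha(t)x$ and taking the logarithm, plus the additive term $d\log g(t)$, gives
\begin{equation*}
v(t,x) = -\frac{d}{2}\log(2\pi r^2(t)) - \frac{\alpha^2(t)\norm{x}^2}{2r^2(t)} + d\log g(t) + \log\!\int_{\R^d}\nu(dx^*)e^{\frac{\alpha(t)g(t)}{r^2(t)}\langle x,x^*\rangle-\frac{g^2(t)}{2r^2(t)}\norm{x^*}^2}.
\end{equation*}
The key observation is that the logarithm-of-integral term has exactly the same form as $u$ in (\ref{eqn:velocity}) with $\beta$ replaced by $\alpha$; call it $w(t,x)$, so $v = -\frac{d}{2}\log(2\pi r^2) - \frac{\alpha^2\norm{x}^2}{2r^2} + d\log g + w$.

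Next I would compute the needed derivatives. For $w$, since it is a log-partition function of the tilted measure, $\nabla w = \frac{\alpha g}{r^2}\,m$ and $\Delta w = \big(\frac{\alpha g}{r^2}\big)^2(\expectCond{\norm{x^*}^2}{\cdot} - \norm{m}^2) = \big(\frac{\alpha g}{r^2}\big)^2\mathrm{Var}$, where $m := \expectCond{x^*}{x_t=\alpha(t)x}$ and the expectations are under the tilted measure; these are the same identities used in Lemma~\ref{lemma_app_app:velociyt_HJB_PDE}. Similarly $\partial_t w$ produces the time-derivatives of the two coefficients $\frac{\alpha g}{r^2}$ and $\frac{g^2}{2r^2}$ against $\langle x,x^*\rangle$ and $\norm{x^*}^2$ respectively. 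The extra quadratic and log terms in $v$ contribute $\nabla\big(-\frac{\alpha^2\norm{x}^2}{2r^2}\big) = -\frac{\alpha^2}{r^2}x$, a constant Laplacian $-\frac{d\alpha^2}{r^2}$, and time-derivatives involving $\partial_t\log r$ and $\partial_t\log g$. Then I would assemble $\partial_t v + \frac{\bar\sigma^2}{2}\Delta v + \frac{\bar\sigma^2}{2}\norm{\nabla v}^2 + \bar\mu^T\nabla v$ with $\bar\sigma^2 = 2\frac{r^2}{\alpha^2}\big(\frac{\dot g}{g}-\frac{\dot r}{r}\big)$ and $\bar\mu = \partial_t\log\frac{g}{\alpha}\,x$, and verify cancellation.

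The bookkeeping-heavy but decisive step is checking that all terms cancel. The strategy is: group the terms by type — the $\norm{x^*}^2$-weighted expectation (call it the second-moment term), the cross term $\langle x,x^*\rangle$-weighted by $m$, the pure $\norm{m}^2$ term, the pure $\norm{x}^2$ term, and the $x$-independent ($d$-proportional) constants — and show each group vanishes separately. The $\norm{\nabla v}^2$ term expands as $\norm{\frac{\alpha g}{r^2}m - \frac{\alpha^2}{r^2}x}^2$, which produces $\norm{m}^2$, $\langle x,m\rangle$, and $\norm{x}^2$ pieces that must be matched against contributions from $\partial_t v$ and $\bar\mu^T\nabla v$; the choice $\bar\sigma^2 = 2\frac{r^2}{\alpha^2}(\frac{\dot g}{g}-\frac{\dot r}{r})$ is precisely what makes the $\norm{m}^2$ and $\langle x,x^*\rangle$-expectation coefficients align, and $\bar\mu = \partial_t\log\frac{g}{\alpha}\,x$ is what kills the residual $\langle x,m\rangle$ and $\norm{x}^2$ terms (the latter also using $\Delta v$'s constant piece and the $d\log g$, $-\frac d2\log r^2$ time-derivatives for the $x$-independent part). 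I expect the main obstacle to be purely organizational: keeping the many $\partial_t$ of ratios like $\frac{\alpha g}{r^2}$ straight and not dropping the constant ($d$-proportional) terms, which is where the $+d\log g(t)$ shift in the definition of $v$ earns its keep. Since the algebra is entirely parallel to the already-proved Lemma~\ref{lemma_app_app:velociyt_HJB_PDE} with $\beta\rightsquigarrow\alpha$ plus the harmless additive constant, no genuinely new idea is required.
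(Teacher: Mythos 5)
Your proposal is correct, but it takes a genuinely different route from the paper. The paper's proof is a three-line argument: it substitutes the continuity equation from Lemma~\ref{lemma_app:pde_density}, $(\partial_t\log\rho) = -b^T\nabla\log\rho-\nabla\cdot b$, into $\partial_t v = (\partial_t\log\rho)(t,\alpha x)+\dot\alpha\, x^T(\nabla\log\rho)(t,\alpha x)+d\frac{\dot g}{g}$, and then uses the score-based expression $b = \frac{\dot g}{g}x+\bigl(r^2\frac{\dot g}{g}-\dot r r\bigr)s$ from Lemma~\ref{lemma_app:b_and_s} to convert everything into $\nabla v$ and $\Delta v$; the HJB structure falls out immediately because $\nabla\cdot(b)$ contains the Laplacian of $\log\rho$. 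Your route instead decomposes $v = w + h$ with $w$ the log-partition function (literally $u$ of (\ref{eqn:velocity}) with $\beta\rightsquigarrow\alpha$) and $h(t,x) = -\tfrac{d}{2}\log(2\pi r^2)-\tfrac{\alpha^2\norm{x}^2}{2r^2}+d\log g$, then verifies cancellation group by group. I checked your bookkeeping: the $x^T\nabla w$ coefficient is $\partial_t\log\frac{\alpha g}{r^2}+\partial_t\log\frac{g}{\alpha}-\bar\sigma^2\frac{\alpha^2}{r^2}=2(\frac{\dot g}{g}-\frac{\dot r}{r})-2(\frac{\dot g}{g}-\frac{\dot r}{r})=0$, the $d$-proportional and $\norm{x}^2$ groups likewise vanish, so the argument closes. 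What each approach buys: the paper's proof is shorter and makes transparent that the $v$-equation is just the continuity equation rewritten in HJB form (reusing the machinery already built for sampling), whereas yours is self-contained given only the definition of $\rho$, makes the role of $\bar\sigma^2$ and $\bar\mu$ as the unique coefficients forcing cancellation explicit, and exposes the parallel with Lemma~\ref{lemma_app_app:velociyt_HJB_PDE} — at the cost of noticeably more algebra, since the quadratic term $-\frac{\alpha^2\norm{x}^2}{2r^2}$ is not a harmless additive constant but genuinely couples into $\nabla v$, $\Delta v$, and $\norm{\nabla v}^2$ (which, to your credit, you do track).
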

\begin{proof}
\begin{align*}
    \partial_t v(t,x) &= (\partial_t\log\rho)(t,\alpha(t)x)+\dot\alpha x^T(\nabla\log\rho)(t,\alpha(t)x)+d\frac{\dot g(t)}{g(t)},\\
    &= -(b^T\nabla \log\rho)(t,\alpha x)-(\nabla\cdot b)(t,\alpha x)+\dot\alpha x^T(\nabla\log\rho)(t,\alpha(t)x)+d\frac{\dot g(t)}{g(t)},\\
    &= -\left(\frac{\dot g}{g}x+\frac{\bar{\sigma}^2}{2}\nabla v\right)^T\nabla v - \frac{\bar{\sigma}^2}{2}\Delta v + \frac{\dot \alpha}{\alpha}x^T\nabla v.
\end{align*}
where $\bar{\sigma}^2 = 2\frac{r^2(t)}{\alpha^2(t)}\left(\frac{\dot g(t)}{g(t)}-\frac{\dot r(t)}{r(t)}\right)$. Simplifying, we get
\begin{equation}
    \partial_t v+\frac{\bar{\sigma}^2}{2}\Delta v+\left(\left(\frac{\dot g}{g}-\frac{\dot \alpha}{\alpha}\right)x+\frac{\bar{\sigma}^2}{2}\nabla v\right)^T\nabla v=0.
\end{equation}
\end{proof}

\section{Implementation Details}\label{apndx:implemnetation_details}
We provide the details of our implementation here. Our implementation is based on the JAX\footnote{https://jax.readthedocs.io/en/latest/index.html} framework. We rely on the automatic differentiation feature in JAX to compute the gradients of functions required in our algorithms. Table~\ref{tab:hyperparams} gives the hyperparameters of the sampler that we used in our simulations. 
 All our simulations are performed with $T=1$. We note that choosing a small $T$ would require the neural network to represent functions with a large Lipshitz constant, thus increasing the complexity of the neural network. We next present the details of the neural networks used to represent the solution to the PDEs (\ref{eqn:velocity_pde}),(\ref{eqn:score_pde}) and (\ref{eqn:pde_velocity_interpolant}). 

\subsection{Neural networks}
In score-based generative modeling, the score function is parameterized directly by a neural network. Our formulation of solving PDEs using FBSDE requires us to have a representation for log-likelihood. Consequently, we utilize energy-based models and obtain the score function by taking the gradient of log-likelihood. However, since we consider unnormalized densities, we use only the score function to obtain the loss. Similar to the approach taken in \cite{zhang_path_2022,vargas_denoising_2022,berner_optimal_2023}, we integrate the terminal condition of PDEs into the architecture of the neural network. In particular, we use a neural network of the following form:
\begin{equation*}
     \Phi^\theta(t,x) = \Phi_1^{\theta_1}(t,x)+\Phi_2^{\theta_2}(t)\phi(x),
\end{equation*}
where $\theta = \{\theta_1,\theta_2\}$ and $\phi$ is the terminal condition of the PDE under consideration. We initialize the networks $\Phi_1^{\theta_1}$ with $0$ and $\Phi_2^{\theta_2}$ with $1$ so that $u^\theta(T,\cdot)=\phi(\cdot)$ in the beginning. Both networks use Fourier Embedding \cite{tancik_fourier_2020} to encode time. The architecture of the neural networks $\Phi_1^{\theta_1}$ and $\Phi_2^{\theta_2}$ can be described as follows:
$$
\Phi_1^{\theta_1}(t,x) = L_1\circ\varrho\circ L\circ\varrho\circ L\circ\varrho\bigg(L(x)+L\circ\varrho\circ L\circ\text{Fourier}(t)\bigg), 
$$
$$
\Phi_2^{\theta_2}(t) = L_1\circ\varrho\circ L\circ\varrho\circ L\circ\varrho\circ L\circ\text{Fourier}(t),
$$
where $\text{Fourier}(\cdot)$ is the Fourier embedding function with $128$ outputs, $L$ is a linear dense layer with $64$ outputs, $L_1$ is a linear dense layer with single output and $\varrho$ is the GELU \cite{hendrycks_gaussian_2023} activation function. We mention that this neural network architecture is similar to the one used in \cite{berner_optimal_2023}. However, the above neural network implements a scalar function that approximates the solution to a PDE, whereas the neural network in  \cite{berner_optimal_2023} approximates the gradient of the solution to the PDE.

For a full interpolant-based sampler, as explained earlier, we need to solve two PDEs. This requires two neural networks--one for $t\in[0,T']$ and another for $t\in[T',T]$. The networks we use are as follows:
$$
u^\theta(t,x) = \Phi_1^{\theta_1}(t,x)+\Phi_2^{\theta_2}(t)(-\psi(T',x)),
$$
$$
v^{\theta'}(t,x) = \Phi_1^{\theta_1'}(t,x)+\Phi_2^{\theta_2'}(t)\varphi(x),
$$
where $\psi$ is the function that appear in (\ref{eqn:velocity}) and $\varphi$ is given in (\ref{eqn:score_term_condition}.
% To make a fair comparison, in our experiments, we use the same network for half interpolant-based sampler as well.

\subsection{Training}
We use the Adam optimizer \cite{kingma_adam_2017} to optimize the weights of the neural networks. We use a piecewise linear scheduling for the learning rate with its value becoming 1/5th every 2000 training steps. We also do exponential moving averaging of the neural network weights. We detached the process \ref{eqn:fbsde_init_process} from the computational graph while computing the gradients. Table~\ref{tab:hyperparams} shows the values of the training-related hyperparameters used in our experiments.

\begin{table}
  \caption{Hyperparameters}
  \label{tab:hyperparams}
  \centering
  \begin{tabular}{ll}
    \toprule
    \multicolumn{2}{c}{\textbf{Sampler}} \\
    $T$ & 1.0\\
    $T'$ & 0.5\\
    $\beta(t)$ & $\frac{r(t)}{g(t)}$\\
    $\alpha(t)$ & 1.0\\
    $\delta$ & $5*10^{-6}$\\
    $\lambda$ & $2000/\delta$\\
    \midrule
    \multicolumn{2}{c}{\textbf{Training}}\\
    Number of steps & 10000\\
    Batch Size & 128\\
    Initial Learning rate     & $5*10^{-3}$\\
    Max Grad Norm   & 1.0\\
    SDE Steps & 60\\
    \midrule
    \multicolumn{2}{c}{\textbf{Sampling}}\\
    SDE steps & 1000\\
    Samples & 10000\\
    $\eps$ & 1.0\\
    \bottomrule
  \end{tabular}
\end{table}
Figure~\ref{fig:trainphase} shows the training loss as a function of training steps. The plot on the left is for GMM with $d=2$, and the plot on the right is for MoG ($d=100$).
\begin{figure}[!htb]
    \centering
        % \centering
        \includegraphics[scale=0.5]{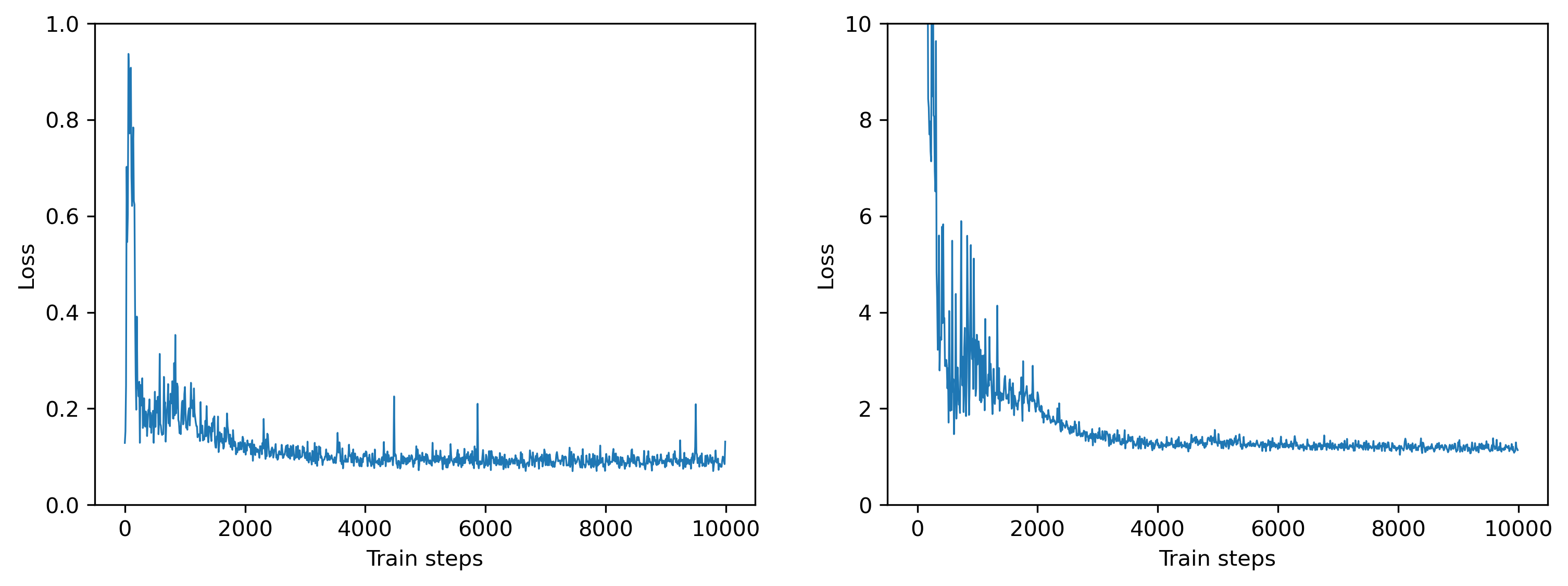}
        \caption{Training phase plot at $d=2$ and $d=100$}
        \label{fig:trainphase}
\end{figure}

\subsection{Sampling}
We sample using procedures given in \ref{proc:sample_half_int} and \ref{proc:smaple_int}. The number of discretization steps we use is $1000$. We use a constant diffusion coefficient while sampling. See Table~\ref{tab:hyperparams}. 

\section{Estimating the normalization constant}\label{apndx:est_logZ}
One of the remarkable features of diffusion-based sampling approaches is that they allow for estimating the normalization constant of the given unnormalized density. We describe here the method for estimating the normalization constant for the half interpolant-based sampler. The log normalization constant can be estimated similarly for a full interpolant-based sampler. Let $u$ be the function given in (\ref{eqn:velocity}). We know $u$ is the solution to PDE (\ref{eqn:velocity_pde}) under the terminal condition \ref{eqn:terminal_cond_vel_pde}. Let $X_t$ be the solution to an SDE given by
\begin{equation*}
    dX_t = \left(\sigma^2(t)\nabla u(t,X_t)+ \mu(t,X_t)\right)dt + \sigma(t)dW_t,\quad X_0=0.
\end{equation*}
Applying It\^o's lemma to $u(t,X_t)$, we have
\begin{align*}
    du(t,X_t) &= \partial_t u(t,X_t)dt + \nabla u(t,X_t)^TdX_t + \frac{1}{2}\sigma^2(t)\Delta u(t,X_t)dt\\
    &=  \left(\partial_t u+\frac{1}{2}\sigma^2(t)\Delta u+\left(\sigma^2(t)\nabla u +\mu(t,X_t)\right)^T\nabla u\right)dt + \sigma(t)\nabla u^TdW_t\\
    &=  \frac{1}{2}\sigma^2(t)\norm{\nabla u(t,X_t)}^2dt + \sigma(t)\nabla u^TdW_t\\
\end{align*}
Integrating from $0$ to $T$, we get
\begin{equation}\label{eqn:logZ_est}
    u(0,0) = u(T,X_T)-\frac{1}{2}\int_0^T\sigma^2(t)\norm{\nabla u(t,X_t)}^2dt-\int_0^T\sigma(t)\nabla u^TdW_t
\end{equation}
Equation (\ref{eqn:logZ_est}) gives a viable strategy to estimate the normalization constant. Suppose we want to sample from $\pi = \frac{\hat{\pi}}{Z}$, given only $\hat{\pi}$. From the definition of $u$, we have $u(T,X_T) = \log\frac{\pi(\beta(T)X_T)}{\psi(T,\beta(T)X_T)} = \log\frac{\hat{\pi}(\beta(T)X_T)}{\psi(T,\beta(T)X_T)} - \log Z $. Moreover, we have $u(0,0) = 0$. Hence, we get
\begin{equation*}
0 = -\frac{1}{2}\int_0^T\sigma^2(t)\norm{\nabla u(t,X_t)}^2dt-\int_0^T\sigma(t)\nabla u^TdW_t + \log\frac{\hat{\pi}(\beta(T)X_T)}{\psi(T,\beta(T)X_T)} - \log Z,
\end{equation*}
which after taking expectation gives
\begin{multline}
    \log Z = \mathbb{E}\bigg[-\frac{1}{2}\int_0^T\sigma^2(t)\norm{\nabla u(t,X_t)}^2dt-\int_0^T\sigma(t)\nabla u^TdW_t\\ +\log\hat{\pi}(\beta(T)X_T)-\log(\psi(T,\beta(T)X_T))\bigg]
\end{multline}

\section{Additional Experiments and Results}\label{apndx:additionalExps}
\begin{figure}
  \centering
  \includegraphics[scale = 0.55]{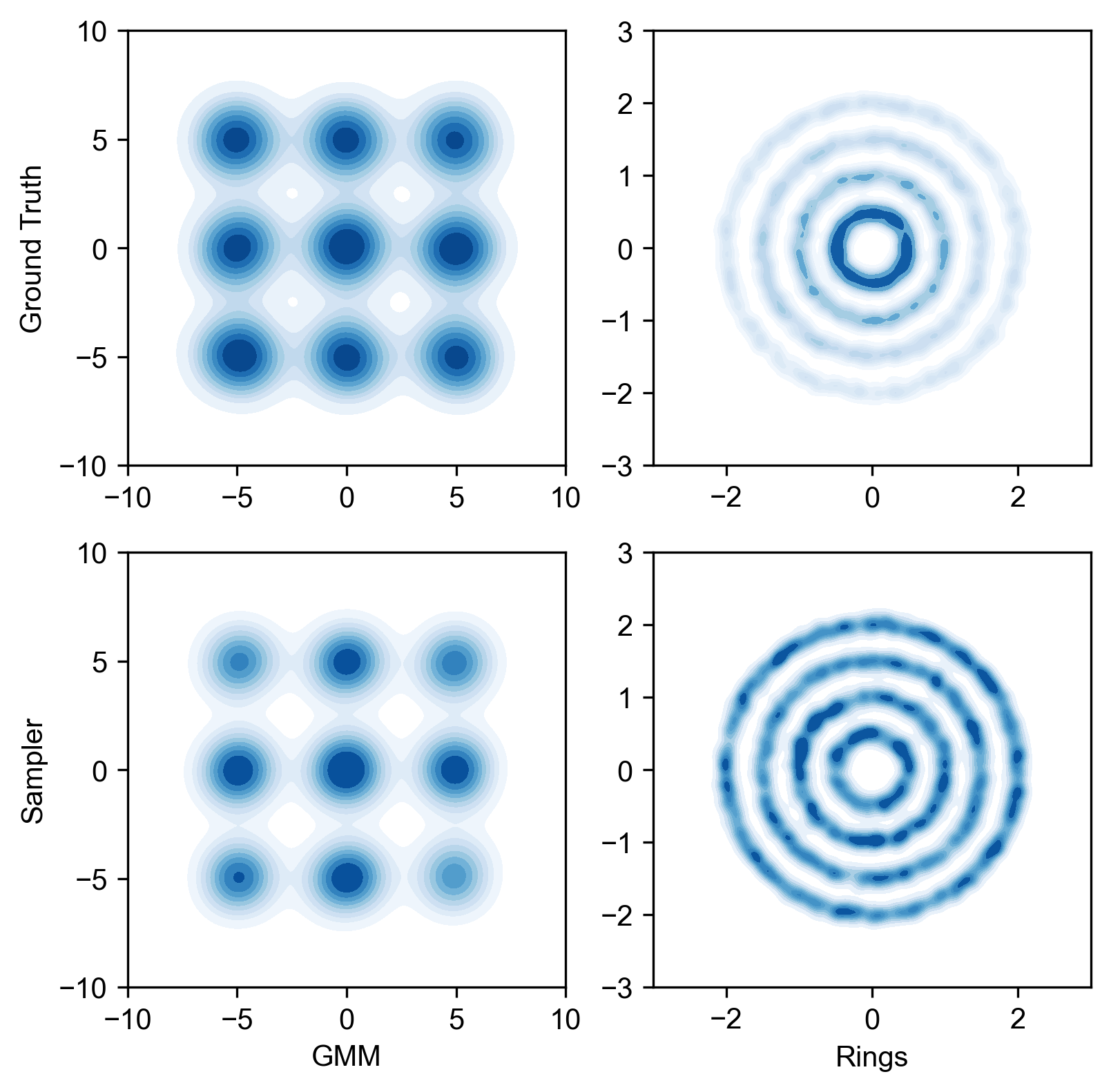}
  \caption{KDE plots of Gaussian mixture, and Rings distributions.}
  \label{fig:kde_plots_tech}
\end{figure}
Figure~\ref{fig:kde_plots_tech} shows the kernel density estimation (KDE) plots of samples obtained by our methods for 2-dimensional Gaussian mixture distribution and Rings distribution. These plots illustrate that our method can sample from distributions with multiple modes. 
%Table~\ref{tab:estimates_tech} shows the estimates of different quantities from these samples.
Figure~\ref{fig:estimates_functionals} shows the estimates of $\mathbb{E}|X|_1$ (Mean Absolute) and $\mathbb{E} \norm{X}^2$ (Mean Squared) computed from the samples obtained using full interpolant based sampler with different interpolants. Refer to Table~\ref{tab:estimates_tech} for the estimates at the end of the training for the interpolant $g(t)=\text{sin}(\pi t/2), r(t)=\text{cos}(\pi t/2)$. These estimates are without correction using important weights and hence indicate the true quality of the samples.
\begin{figure}
  \centering
  \includegraphics[scale = 0.35]{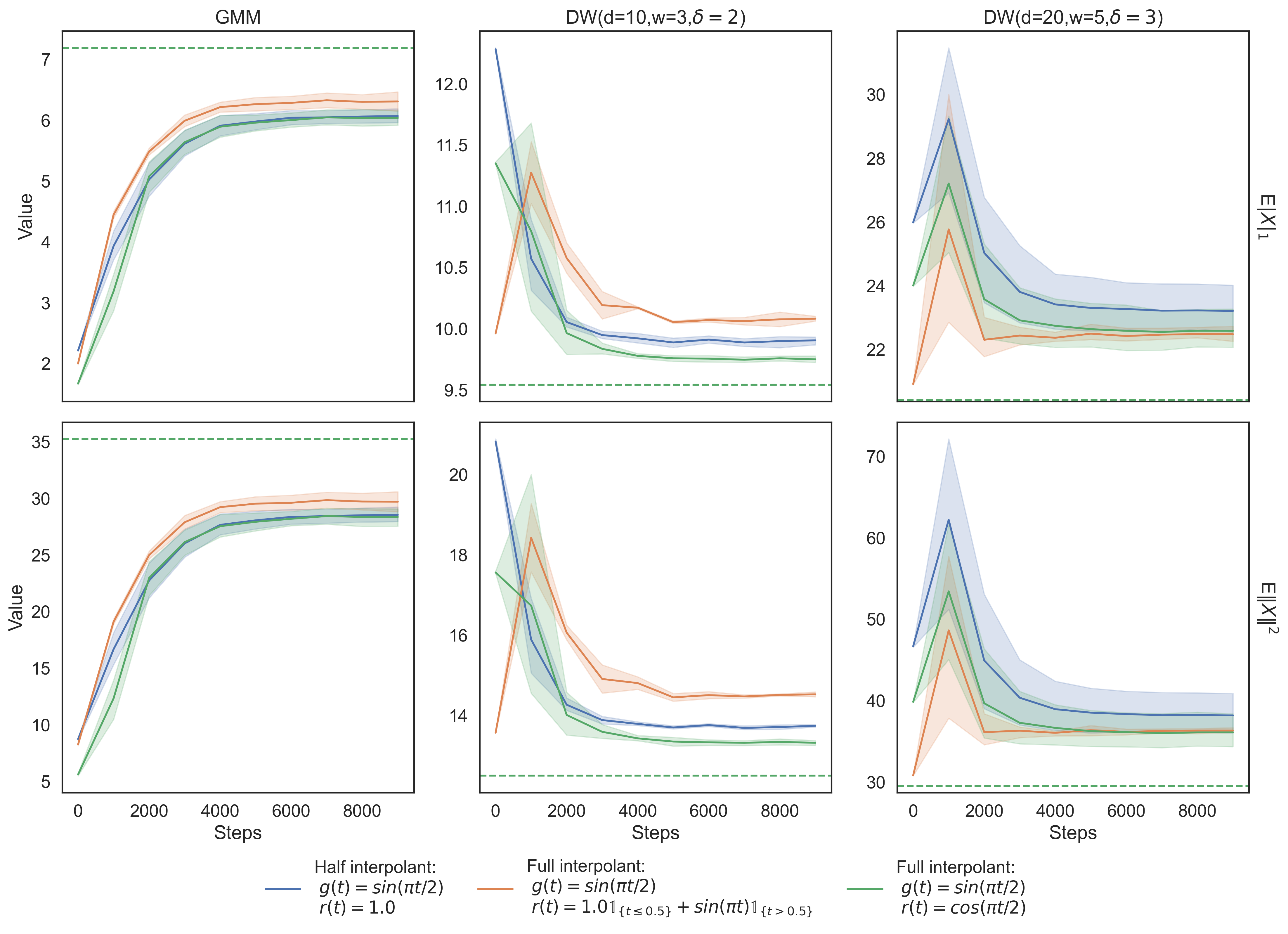}
  \caption{Estimates of $\mathbb{E}|X|_1$ and $\mathbb{E}\norm{X}^2$ as a function of training steps.}
  \label{fig:estimates_functionals}
\end{figure}

\begin{table}
  \caption{Estimates (True values in the paranthesis)}
  \label{tab:estimates_tech}
  \centering
  \begin{tabular}{llll}
    \toprule
     \multirow{2}{*}{Quantity} & \multicolumn{3}{c}{Distributions} \\
     \cmidrule(r){2-4}
                               &    GMM          & DW$(d=10,w=3,\delta=2)$   & DW$(d=20,w=5,\delta=3)$  \\
     \midrule
             $\mathbb{E}{|X|_1}$     &    6.04$\pm$0.18 (7.19) & 9.75$\pm$0.04 (9.54)  &22.59$\pm$0.87 (20.42)    \\
              $\mathbb{E}{\norm{X}^2}$     &  28.36$\pm$1.05 (35.26) & 13.32$\pm$0.08 (12.51)  &36.11$\pm$2.88 (29.54)  \\
    \bottomrule
  \end{tabular}
\end{table}

\subsection{Sampling from a statistical physics inspired high-dimensional distribution}\label{sec:gaussian_spin_glass}
Now, we consider a statistical physics \textit{spin glass} model \cite{arous_aging_2001, barra_about_2014}. The precise definition is slightly different in these two works and we follow the incarnation of \cite{barra_about_2014}. This model was introduced as a ``soft spin'' version of the celebrated {\it spherical} Sherrington-Kirkpatrick model \cite{kosterlitz_spherical_1976}. The free energy can be computed exactly and here serves as a benchmark to assess the experiments. 

The probability density of this soft spin spherical spin glass is given by
\begin{equation*}
    \pi(x) = \frac{1}{Z}\exp\left\{\frac{\beta}{\sqrt{2d}}\sum_{i,j=1}^dA_{ij}x_ix_j-\frac{\beta^2}{4d}\left(\sum_{i=1}^dx_i^2\right)^2-\frac{1}{2}\sum_{i=1}^dx_i^2\right\},
\end{equation*}
where $A_{ij}\overset{\text{i.i.d.}}{\sim}\cN{0,I_d}$. The asymptotic rigorous prediction for the free energy \cite{barra_about_2014} is given by
$$
\lim_{d\rightarrow\infty}\frac{1}{d}\mathbb{E}\log Z = \begin{cases}
    0 &\text{for}\;\beta < 1,\\
    -\frac{1}{2}\log\beta + \frac{\beta\bar{q}}{2}+\frac{\beta^2\bar{q}^2}{d},\quad \bar{q} = \frac{\beta-1}{\beta^2} &\text{for}\;\beta \ge 1
\end{cases}
$$
where the limit is attained almost surely, in other words it is equal to the limit of the expectation with respect to $A_{ij}\overset{\text{i.i.d.}}{\sim}\cN{0,I_d}$).
The theory predicts that there is a phase transition at $\beta=1$.

We use our full interpolant-based sampler to estimate $\log Z$ for different values of $\beta$. In Figure~\ref{fig:logZ_gsg} we compare the obtained estimate of $\frac{1}{d}\log Z$ with the theoretical prediction. The estimates obtained through our sampler are consistent with the phase transition. The experimental points correspond to one instance of teh coupling matrix $A_{ij}$ indicating that already for $d=100$ the free energy is well concentrated around its average. 

\begin{figure}
  \centering
  \includegraphics[scale = 0.7]{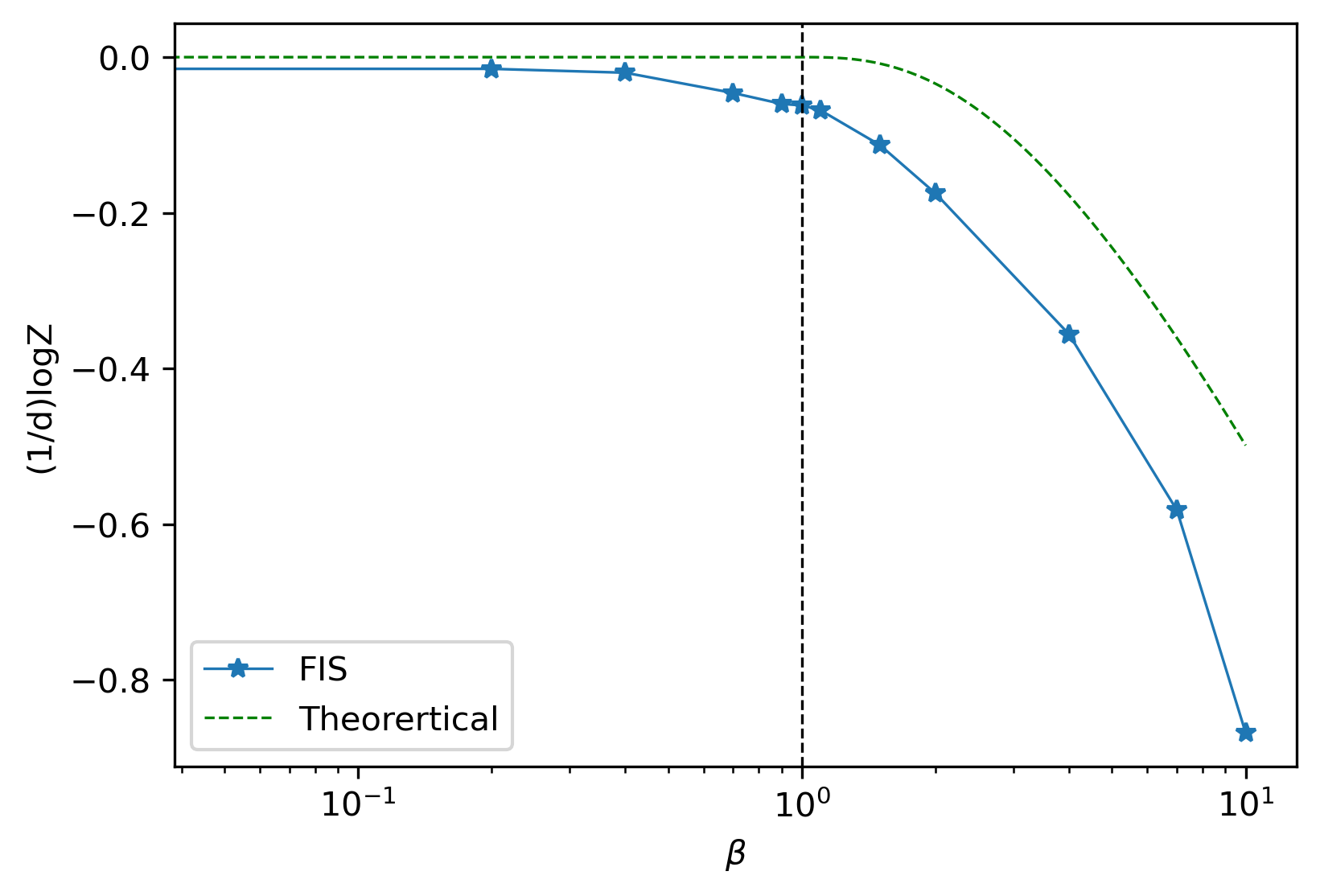}
  \caption{Estimate of $\frac{1}{d}\log Z$ for Gaussian Spin Glass model with $d=100$.}
  \label{fig:logZ_gsg}
\end{figure}

\subsection{Training and Sampling Times}
\begin{table}
  \caption{Training and sampling times}
  \label{tab:times}
  \centering
  \begin{tabular}{LLL}
    \toprule
     Distribution & Training time \par per step (mS) & Sampling time per \par discretization step (mS) \\
     \midrule
    GMM & 97.7 & 9.3\\
    Funnel & 99.7 & 10.3\\
    Double well ($d=10$) & 102.7 & 9.8\\
    Double well ($d=20$) & 105.0 & 10.8\\
    Spin Glass ($d=100$) & 98.1 & 10.4\\
    \bottomrule
  \end{tabular}
\end{table}
In this section, we discuss the typical training and sampling time for our sampler. We ran our experiments on an NVIDIA GeForce GTX 1070 (8GB) GPU. In comparison to the conventional MCMC algorithms, a disadvantage of our methods is that we have learnable parameters. However, the training time required can be amortized in time if we are interested in drawing a large number of samples from a specific target distribution. Table~\ref{tab:times} shows the training time per gradient descent step required for different distributions. We take a total of 10,000 gradient descent steps during training. Hence, the total time required for training is around 17 minutes for each of the distributions. Table~\ref{tab:times} also shows the time required during the sampling phase per discretization step of the SDE. The number of Euler-Maruyama discretization steps we take is 1000, which amounts to a total of 10 seconds for drawing 10,000 samples. 

\subsection{Comparison to Langevin Diffusion for Sampling}
\begin{figure}
  \centering
  \includegraphics[scale = 0.55]{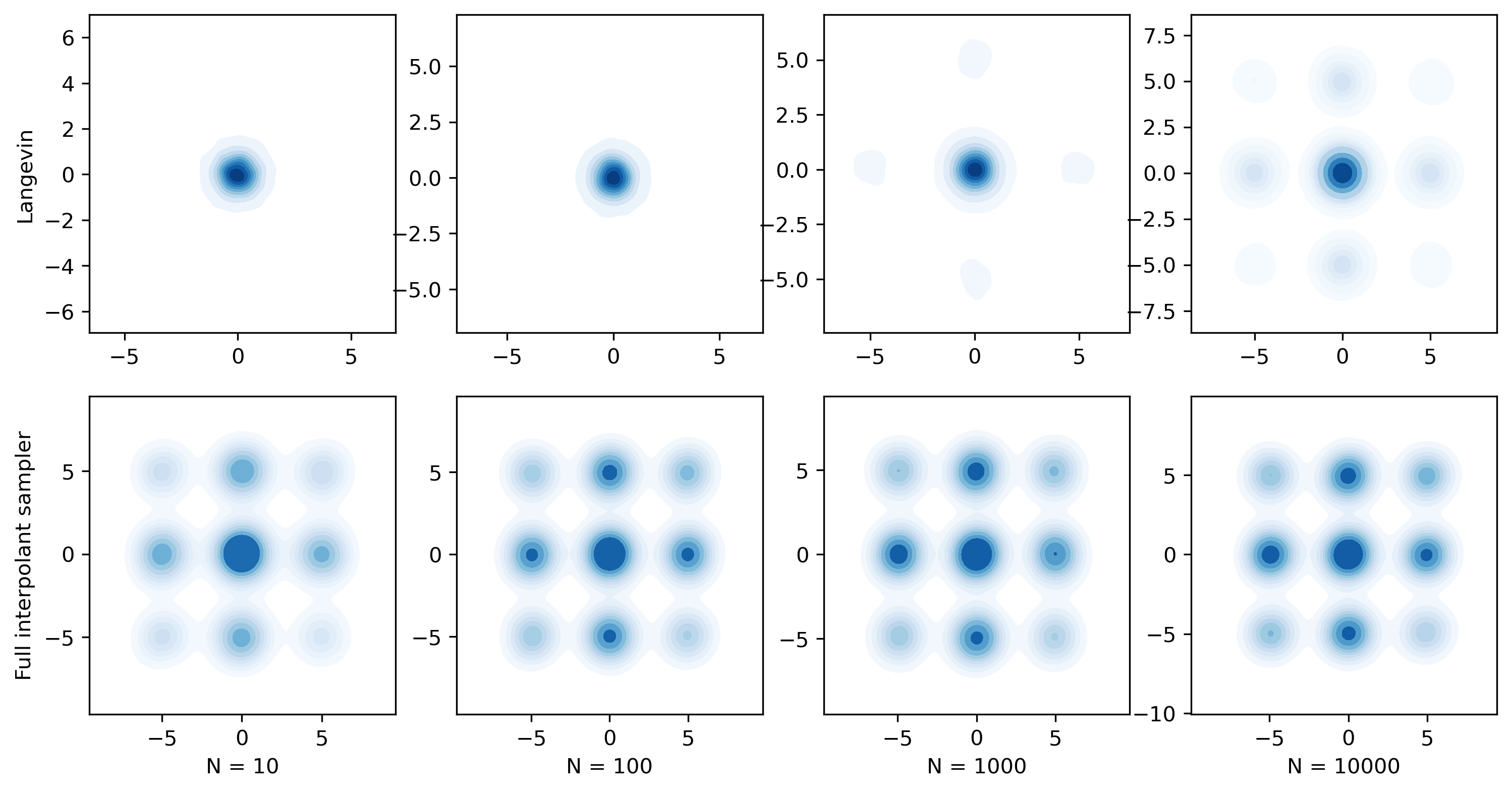}
  \caption{KDE plots of GMM samples generated by Langevin diffusion-based algorithm and full interpolant-based algorithm for different number of discretization steps.}
  \label{fig:langevin}
\end{figure}

\begin{table}
  \caption{Comparison of estimates using Langevin and Full interpolant sampler.}
  \label{tab:langevin_estimates}
  \centering
  \begin{tabular}{cccccc}
    \toprule
     Distribution & Steps & \multicolumn{2}{c}{$\mathbb{E}|X|_1$} & \multicolumn{2}{c}{$\mathbb{E}\norm{X}^2$} \\
     \cmidrule(r){3-6}
     & & LMC & FIS & LMC & FIS\\
     \midrule
    \multirow{4}{*}{GMM} & 10 & 1.1 & 5.5 & 1.5 & 25.2\\
    & 100 & 1.1 & 5.9 & 1.7 & 28.0\\
    & 1000 & 1.6 & 6.0 & 4.4 & 28.0\\
    & 10000 & 4.7 & 6.0 & 21.3 & 28.4\\
    \midrule
    \multirow{4}{*}{DW($d=10$)} & 10 & 8.1 & 9.7 & 9.9 & 13.3\\
    & 100 & 9.0 & 9.72 & 11.42 & 13.2\\
    & 1000 & 9.7 & 9.5 & 12.5 & 13.3\\
    & 10000 & 9.7 & 9.5 & 12.5 & 13.3\\
    \midrule
    \multirow{4}{*}{DW($d=20$)} & 10 & 16.4 & 22.0 & 20.6 & 22.15\\
    & 100 & 19.5 & 22.2 & 27.3 & 34.6\\
    & 1000 & 20.5 & 22.2 & 29.6 & 34.6\\
    & 10000 & 20.4 & 22.1 & 29.4 & 34.5\\
    \bottomrule
  \end{tabular}
\end{table}

To illustrate the benefits of our method, we compare it to the vanilla Langevin diffusion-based sampling algorithm. A Langevin diffusion is given by the solution to the SDE:
\begin{equation*}
    dS_t = \eta\nabla\log\pi(S_t)dt + \sqrt{2\eta} dW_t.
\end{equation*}
 It is well known that the distribution of $S_t$ as $t$ goes to infinity is $\pi$. A sampling scheme based on Langevin diffusion is given by its discretization:
 \begin{equation*}
     S_{n+1} = S_n + \nabla\log\pi(S_t)\delta + \sqrt{2\delta}w_{n+1},
 \end{equation*}
 where $w_i\overset{i.i.d.}{\sim}\cN{0,I_d}$. In Figure~\ref{fig:langevin} we compare the KDE plots of samples generated by a Langevin diffusion-based algorithm ($\delta=0.1$) and a full interpolant-based algorithm as a function of the number of discretization steps for a Gaussian mixture model. In this experiment, we made sure that the time required per discretization step is the same for both algorithms. We observe that the Langevin diffusion requires a large number of steps to explore all the modes of the distribution, while our algorithm explores all the modes even with merely 10 discretization steps. A comparison of estimates of $\mathbb{E}|X|_i$ and $\mathbb{E}\|X\|^2$ of Gaussian mixture and double well distributions obtained through Langevin Monte Carlo (LMC) and Full interpolant sampler (FIS) is given in Table~\ref{tab:langevin_estimates}.
\subsection{Comparison with Generalized Bridge Sampler (GBS)}\label{sec:comparison_gbs}
\begin{figure}
  \centering
  \includegraphics[scale = 0.4]{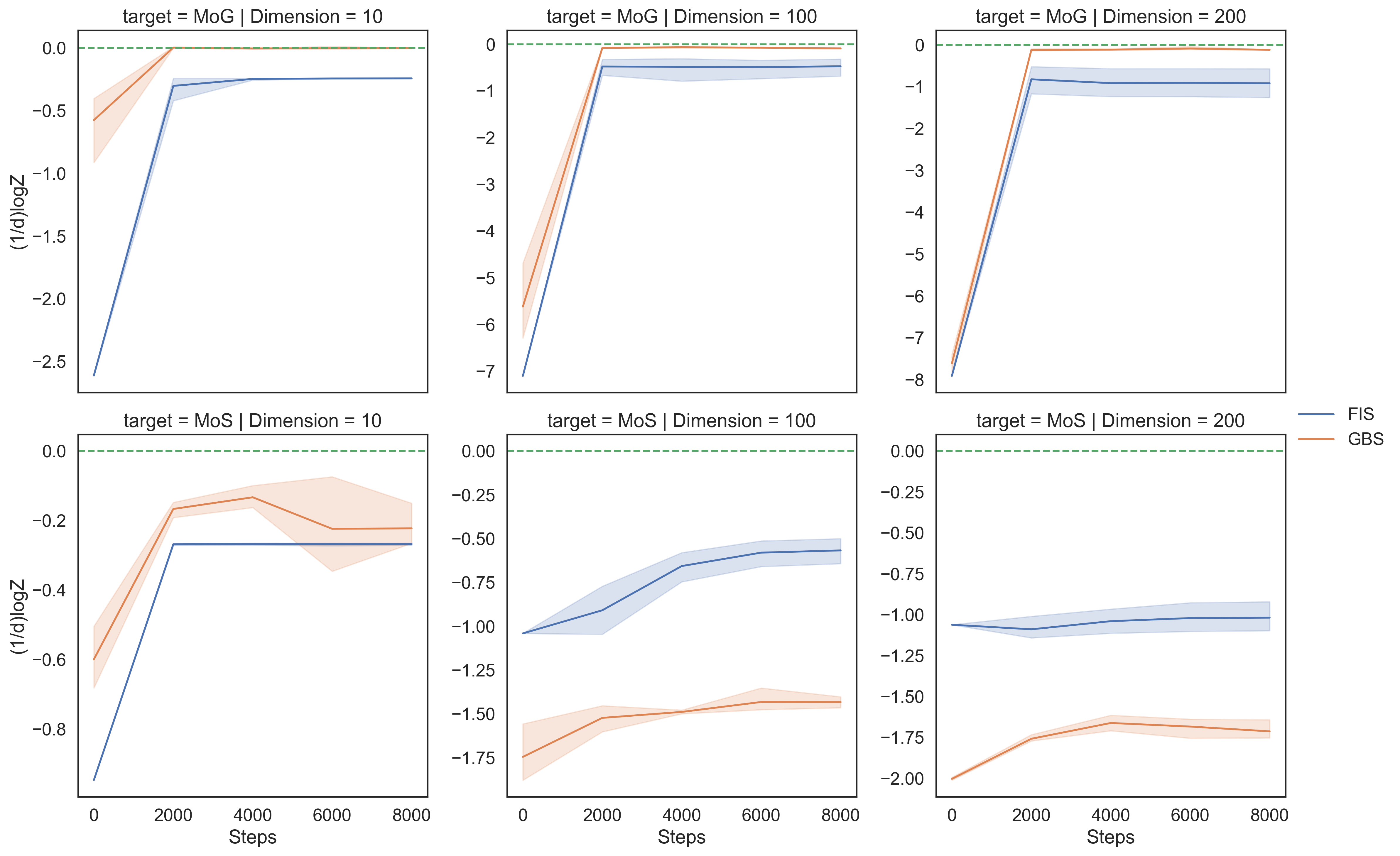}
  \caption{Estimates of $\frac{1}{d}\log Z$ as a function of training steps given by FIS and GBS for different targets and dimensions. }
  \label{fig:gbs_comp_plot}
\end{figure}

In Figure~\ref{fig:gbs_comp_plot} we compare the performance of FIS (Full Interpolant Sampler with trigonometric interpolant) and GBS (General Bridge Sampler) \cite{richter_improved_2023, blessing_beyond_2024} as a function of training steps for two different target distributions and for dimensions $d=10,100,200$. Here, MoG refers to a Mixture of isotropic Gaussian distribution with 10 components with their mean selected uniformly at random from $[-10,10]^d$ and MoS is a Mixture of Student t-distribution (degree of freedom = 2) with 10 components with their mean selected uniformly at random from $[-10,10]^d$. We chose to compare our algorithm with GBS because of their similar computational complexity. The implementation of GBS was taken from \cite{blessing_beyond_2024}.

% \begin{table}[!htb]
%   \footnotesize
%   \label{tab:gbs_com_wass}
%   \centering
%   \begin{tabular}{cccccccc}
%     \toprule
%      \multirow{3}{*}{Metric} & \multirow{3}{*}{Algorithm} &  \multicolumn{6}{c}{Distributions}\\
%      \cmidrule(r){3-8}
%      & & \multicolumn{3}{c}{MoG} & \multicolumn{3}{c}{MoS} \\
%      \cmidrule(r){3-8}
%      & & $d=10$ & $d=100$ & $d=200$ & $d=10$ & $d=100$ & $d=200$ \\
%      \midrule
%     \multirow{2}{*}{$\log Z$ (True value = 0)} & FIS &  $-2.43 \pm 0.02$ & $-47.24 \pm 26.1$ & $-183.56 \pm 90.24$ & $-2.68 \pm 0.04$ & $-56.75 \pm 9.5$ & $-203.77 \pm 23.65$\\
%     & GBS &  $-0.02 \pm 0.03$ & $-8.96 \pm 1.25$ & $-23.91 \pm 1.26$ &  $-2.22 \pm 0.63$ & $-143.3 \pm 3.11$ & $-342.7 \pm 12.24$\\
%     \midrule
%     \multirow{2}{*}{2-Wasserstein} & FIS &  $243.5 \pm 5$ & $5088 \pm 688$ & $10035 \pm 1136$ & $285.9 \pm 110$ & $3156 \pm 1253$ & $9208 \pm 1653$\\
%     & GBS &  $6.21 \pm 0.04$ & $147.8 \pm 0.83$ & $323.4 \pm 2.3$ &  $317.1 \pm 8.8$ & $14932 \pm 76$ & $34573 \pm 3650$\\
%     \bottomrule
%   \end{tabular}
%   \caption{Estimates of $\log Z$ and 2-Wasserstein distance obtained using FIS and GBS for different targets and dimensions.}
% \end{table}

\begin{table}[!htb]
  % \caption{Comparison of FIS and GBS algorithms across different distributions and dimensions.}
  \caption{Estimates of $\log Z$ and 2-Wasserstein distance obtained using FIS and GBS for different targets and dimensions.}
  \label{tab:gbs_com_was}
  \centering
  \begin{tabular}{cccccc}
    \toprule
     Distribution & Dimension & \multicolumn{2}{c}{$\log Z$ (True value = 0)} & \multicolumn{2}{c}{2-Wasserstein} \\
     \cmidrule(lr){3-4} \cmidrule(lr){5-6}
     & & FIS & GBS & FIS & GBS \\
     \midrule
    \multirow{3}{*}{MoG} & $d=10$ & $-2.43 \pm 0.02$ & $-0.02 \pm 0.03$ & $243.5 \pm 5$ & $6.21 \pm 0.04$ \\
     & $d=100$ & $-47.24 \pm 26.1$ & $-8.96 \pm 1.25$ & $5088 \pm 688$ & $147.8 \pm 0.83$ \\
     & $d=200$ & $-183.56 \pm 90.24$ & $-23.91 \pm 1.26$ & $10035 \pm 1136$ & $323.4 \pm 2.3$ \\
     \midrule
    \multirow{3}{*}{MoS} & $d=10$ & $-2.68 \pm 0.04$ & $-2.22 \pm 0.63$ & $285.9 \pm 110$ & $317.1 \pm 8.8$ \\
     & $d=100$ & $-56.75 \pm 9.5$ & $-143.3 \pm 3.11$ & $3156 \pm 1253$ & $14932 \pm 76$ \\
     & $d=200$ & $-203.77 \pm 23.65$ & $-342.7 \pm 12.24$ & $9208 \pm 1653$ & $34573 \pm 3650$ \\
    \bottomrule
  \end{tabular}
\end{table}

Table~\ref{tab:gbs_com_was} compares the performance of FIS and GBS at the end of training (10000 iterations). The experimental setting is the same as that of Figure~\ref{fig:gbs_comp_plot}. The metrics we use for comparison are 
 and 2-Wasserstein distance (Entropy regularized optimal transport cost) \cite{blessing_beyond_2024}. Based on the observations in Figure~\ref{fig:gbs_comp_plot} and Table~\ref{tab:gbs_com_was}, it is not possible to definitively conclude that one method is better than the other, as their performance may depend on the specific distribution being considered.
 \subsection{Comparison with other Diffusion-based Samplers}
 Existing diffusion-based samplers that could be considered closest to our work are PIS \cite{zhang_path_2022}, DIS \cite{berner_optimal_2023}, and DDS \cite{vargas_denoising_2022}. These methods facilitates the use of important sampling techniques to correct the samples while estimating funcitonals. The estimate of $\log Z$ of GMM without importance sampling is mentioned in \cite{zhang_path_2022}, and it has a bias of -0.44. In this setting, our method has a lower bias given by $-0.26$. The $\log Z$ estimate of the Funnel distribution reported in \cite{zhang_path_2022} is much better than ours. However, as noted in \cite{berner_optimal_2023} and \cite{vargas_denoising_2022}, the Funnel distribution used in \cite{zhang_path_2022} has a favourable variance compared to the standard Funnel distribution that we used.

 Training of neural networks in \cite{zhang_path_2022}, DIS \cite{berner_optimal_2023}, and DDS \cite{vargas_denoising_2022} involves computing gradient of a Neural SDE, which can turn quite expensive computationally. However, our methods do not take gradient with respect to the process (\ref{eqn:fbsde_init_process}).  

 \subsection{Effect of different parameters}\label{sec:ablation}
 Table~\ref{tab:ablation} shows the influence of different hyperparameters on the estimates. The target distribution used is GMM ($d=2$).
 \begin{table}[!htb]
\caption{Influence of different hyperparameters on the estimates obtained. Here, $c_1 = 5*10^{-6}$, $c_2=4*10^{8}$.}
    \centering
    \begin{tabular}{ccccc}
        \toprule
         Hyperparameter & Value & \multicolumn{3}{c}{Estimates}\\
         \cmidrule(r){3-5}
         & & $\log Z$ & $\mathbb{E}|X|_1$ & $\mathbb{E}\norm{X}^2$\\
         \midrule
        \multirow{4}{*}{$\delta$} & $0.1c_1$ & $-0.66 \pm 0.10$ & $4.69 \pm 0.39$ & $20.44 \pm 2.26$\\
        & $1.0c_1$ & $-0.25 \pm 0.02$ & $6.05 \pm 0.24$ & $28.42 \pm 1.37$\\
        & $10c_1$ & $-0.67 \pm 0.19$ & $4.27 \pm 0.71$ & $18.41 \pm 3.93$ \\
        \midrule
        \multirow{4}{*}{$\lambda$} & $0.1c_2$ & $-0.30 \pm 0.02$ & $5.85 \pm 0.17$ & $27.25 \pm 0.97$\\
        & $1c_2$ & $-0.26 \pm 0.01$ & $6.02 \pm 0.21$ & $28.28 \pm 1.18$\\
        & $10c_2$ & $-0.74 \pm 0.16$ & $4.1 \pm 0.55$ & $17.4 \pm 3.05$ \\
        \midrule
        \multirow{4}{*}{SDE steps} & 30 & $-0.31 \pm 0.04$ & $5.86 \pm 0.31$ & $27.45 \pm 1.77$\\
        & 60 & $-0.27 \pm 0.02$ & $6.05 \pm 0.22$ & $28.47 \pm 1.22$\\
        & 100 & $-0.27 \pm 0.03$ & $6.07 \pm 0.20$ & $28.55 \pm 1.17$ \\
        \bottomrule
      \end{tabular}
    \label{tab:ablation}
\end{table}

\end{document}

% --- supplement: supplement.tex ---

% If your paper is accepted and the title of your paper is very long,
% the style will print as headings an error message. Use the following
% command to supply a shorter title of your paper so that it can be
% used as headings.
%
%\runningtitle{I use this title instead because the last one was very long}

% If your paper is accepted and the number of authors is large, the
% style will print as headings an error message. Use the following
% command to supply a shorter version of the authors names so that
% they can be used as headings (for example, use only the surnames)
%
%\runningauthor{Surname 1, Surname 2, Surname 3, ...., Surname n}

% Supplementary material: To improve readability, you must use a single-column format for the supplementary material.
\onecolumn
\aistatstitle{Instructions for Paper Submissions to AISTATS 2025: \\
Supplementary Materials}

\section{FORMATTING INSTRUCTIONS}

To prepare a supplementary pdf file, we ask the authors to use \texttt{aistats2025.sty} as a style file and to follow the same formatting instructions as in the main paper.
The only difference is that the supplementary material must be in a \emph{single-column} format.
You can use \texttt{supplement.tex} in our starter pack as a starting point, or append the supplementary content to the main paper and split the final PDF into two separate files.

Note that reviewers are under no obligation to examine your supplementary material.

\section{MISSING PROOFS}

The supplementary materials may contain detailed proofs of the results that are missing in the main paper.

\subsection{Proof of Lemma 3}

\textit{In this section, we present the detailed proof of Lemma 3 and then [ ... ]}

\section{ADDITIONAL EXPERIMENTS}

If you have additional experimental results, you may include them in the supplementary materials.

\subsection{The Effect of Regularization Parameter}

\textit{Our algorithm depends on the regularization parameter $\lambda$. Figure 1 below illustrates the effect of this parameter on the performance of our algorithm. As we can see, [ ... ]}

\vfill